\pdfoutput=1
\PassOptionsToPackage{table,svgnames}{xcolor}
\documentclass[11pt, a4paper, logo]{gdm/googledeepmind}

\makeatletter
\renewcommand\bibentry[1]{\nocite{#1}{\frenchspacing\@nameuse{BR@r@#1\@extra@b@citeb}}}
\makeatother
\pdftrailerid{redacted}
\renewcommand{\today}{}

\usepackage{gdm/gdm-colors}
\usepackage[authoryear,sort&compress,round]{natbib}
\usepackage{comment,dsfont,nicefrac,bbm,xspace}
\usepackage{algorithm,algpseudocode}
\usepackage{mathtools,multirow,makecell,appendix,nccmath,wrapfig}
\usepackage[most]{tcolorbox}
\usepackage{tikz,marvosym,bm,subcaption}
\usepackage[capitalize,noabbrev]{cleveref}

\usepackage{amsmath,amsfonts,bm}

\def\eqref#1{equation~\ref{#1}}

\def\1{\bm{1}}

\DeclareMathAlphabet{\mathsfit}{\encodingdefault}{\sfdefault}{m}{sl}
\SetMathAlphabet{\mathsfit}{bold}{\encodingdefault}{\sfdefault}{bx}{n}

\hypersetup{citecolor=[HTML]{2420cf},linkcolor=[HTML]{2420cf}}
\definecolor{darkgreen}{rgb}{0.0,0.5,0.0}

\definecolor{auditInk}{HTML}{263445}
\definecolor{auditRule}{HTML}{718196}
\definecolor{auditTraceFill}{HTML}{F1F3F5}
\definecolor{auditMonitorFill}{HTML}{E6EBF2}
\definecolor{auditDebugFill}{HTML}{E3F3F3}
\definecolor{auditCorrectionFill}{HTML}{F6F0E0}
\definecolor{auditSufficiencyFill}{HTML}{EFEBF5}
\definecolor{auditTraceText}{HTML}{52606D}
\definecolor{auditMonitorText}{HTML}{245D93}
\definecolor{auditDebugText}{HTML}{008C89}
\definecolor{auditCorrectionText}{HTML}{B58100}
\definecolor{auditSufficiencyText}{HTML}{7651AE}

\newcommand{\highlightbox}[1]{%
  {\setlength{\fboxsep}{1pt}%
   \colorbox{green!66!black!20}{#1}}%
}

\algrenewcommand\algorithmicrequire{\textbf{Input:}}
\definecolor{algcomment}{gray}{0.45}
\algrenewcommand{\algorithmiccomment}[1]{%
    \hfill{\small\color{captiongray}\(\triangleright\)~#1}}

\makeatletter
\g@addto@macro\fs@ruled{%
  \def\@fs@post{\kern2pt\hrule height0.8pt depth0pt\relax}%
}
\makeatother

\usetikzlibrary{positioning,calc}

\definecolor{textblue}{HTML}{2454C6}
\definecolor{textfill}{HTML}{EEF2FC}
\definecolor{textborder}{HTML}{B7C9F3}
\definecolor{latentgreen}{HTML}{008F88}
\definecolor{latentfill}{HTML}{E7F5F3}
\definecolor{latentborder}{HTML}{A9DCD7}
\definecolor{toolorange}{HTML}{B56A00}
\definecolor{toolfill}{HTML}{FFF6E6}
\definecolor{toolborder}{HTML}{E8C98E}
\definecolor{captiongray}{HTML}{53657A}
\definecolor{rulegray}{HTML}{CACDC5}

\renewcommand{\arraystretch}{1.12}
\renewcommand{\eqref}[1]{Eq.~\ref{#1}}

\theoremstyle{plain}
\newtheorem{theorem}{Theorem}[section]
\newtheorem{proposition}[theorem]{Proposition}

\theoremstyle{definition}

\theoremstyle{remark}
\newtheorem{remark}[theorem]{Remark}

\definecolor{stepblue}{HTML}{3B73A8}
\definecolor{steporange}{HTML}{C97531}
\definecolor{steppurple}{HTML}{8268A0}
\definecolor{stepteal}{HTML}{2F8178}

\newcommand{\stepcircle}[2][stepblue]{%
  \tikz[baseline=(n.base)]{
    \node[circle, fill=#1, text=white,
          inner sep=0pt, minimum size=1.1em,
          font=\small\sffamily\bfseries] (n) {#2};
  }%
}

\newcommand{\gaincell}[1]{\cellcolor{green!66!black!18}{\textbf{#1}}}

\definecolor{impr}{RGB}{34, 139, 34}
\definecolor{lightred}{RGB}{255, 230, 230}
\definecolor{darkred}{RGB}{192, 0, 0}

\definecolor{bestbg}{HTML}{FFF2B2}   %
\definecolor{secondbg}{HTML}{E8F0FE} %
\definecolor{LavenderLight}{HTML}{C7C3F5}
\definecolor{LightCoral}{RGB}{240,128,128}
\definecolor{LightBlue}{RGB}{173,216,230}

\definecolor{textfill}{HTML}{DCE8F0}
\definecolor{textborder}{HTML}{7893A6}
\definecolor{textblue}{HTML}{45697E}

\definecolor{latentfill}{HTML}{F3DDD2}
\definecolor{latentborder}{HTML}{C78F77}
\definecolor{latenttext}{HTML}{9D624D}

\definecolor{toolfill}{HTML}{DCECE7}
\definecolor{toolborder}{HTML}{739A8E}
\definecolor{tooltext}{HTML}{547D72}

\definecolor{captiongray}{HTML}{666666}
\definecolor{rulegray}{HTML}{8A8A8A}

\renewtcolorbox[auto counter, number within=section]{definition}[1]{
  enhanced,
  colback=orange!10!white,
  colframe=black!30,
  boxrule=0.6pt,
  top=7.5mm,
  bottom=.5mm,
  left=0.8mm,
  right=0.8mm,
  before skip=5pt, 
  after skip=10pt,
  fonttitle=\bfseries,
  coltitle=white,
  title={Definition~\thetcbcounter: #1},
  attach boxed title to top left={
    yshift=-7.8mm,
    xshift=1mm
  },
  boxed title style={
    colback=orange!70!black,
    rounded corners,
    boxrule=1pt,
    top=0.3pt, bottom=0.3pt, left=2pt, right=2pt
  }
}

\newtcolorbox{highlight}[1]{
  enhanced,
  colback=LightCoral!10,
  colframe=black!30,
  boxrule=0.6pt,
  top=7.5mm,
  bottom=.5mm,
  left=0.8mm,
  right=0.8mm,
  before skip=5pt,
  after skip=10pt,
  fonttitle=\bfseries,
  coltitle=white,
  title={#1},
  attach boxed title to top left={
    yshift=-7.8mm,
    xshift=1mm
  },
  boxed title style={
    colback=FireBrick,
    rounded corners,
    boxrule=1pt,
    top=0.3pt,
    bottom=0.3pt,
    left=2pt,
    right=2pt
  }
}

\tcolorboxenvironment{theorem}{
    colback=LightBlue!20,       %
    colframe=black,     %
    arc=1mm,               %
    top=.6mm,
    bottom=.6mm,
    right=1mm,
    boxrule=.5pt,           %
    fonttitle=\bfseries,   %
    left=1mm,              %
    attach boxed title to top left={yshift=-2mm, xshift=4mm},
    boxed title style={
        colback=black!40,
        arc=1mm,
    },
    before skip=6pt,   %
    after skip=6pt,    %
}

\tcolorboxenvironment{proposition}{
    colback=LightBlue!20,       %
    colframe=black,     %
    arc=1mm,               %
    top=.6mm,
    bottom=.6mm,
    right=1mm,
    boxrule=.5pt,           %
    fonttitle=\bfseries,   %
    left=1mm,              %
    attach boxed title to top left={yshift=-2mm, xshift=4mm},
    boxed title style={
        colback=black!40,
        arc=1mm,
    },
    before skip=6pt,   %
    after skip=6pt,    %
}

\tcolorboxenvironment{lemma}{
    colback=red!10,       %
    colframe=black!,     %
    arc=1mm,               %
    boxrule=1pt,           %
    fonttitle=\bfseries,   %
    left=1mm,              %
    attach boxed title to top left={yshift=-2mm, xshift=4mm},
    boxed title style={
        colback=black!40,
        arc=1mm,
    }
}

\tcolorboxenvironment{remark}{
    colback=LightCoral!10,       %
    colframe=black,     %
    arc=1mm,               %
    top=.6mm,
    bottom=.6mm,
    boxrule=.5pt,           %
    right=1mm,
    fonttitle=\bfseries,   %
    left=1mm,              %
    attach boxed title to top left={yshift=-2mm, xshift=4mm},
    boxed title style={
        colback=black!40,
        arc=1mm,
    },
    before skip=6pt,   %
    after skip=6pt,    %
}

\newcommand{\ours}{HybridAgent\xspace}
\newcommand{\link}{HybridLink\xspace}

\definecolor{hySlate}{HTML}{3A506B}\definecolor{hyPaper}{HTML}{F7F8FA}
\definecolor{hyTextAcc}{HTML}{3A6EA5}\definecolor{hyLatAcc}{HTML}{D2641C}\definecolor{hyToolAcc}{HTML}{2E8B57}\definecolor{hyObsAcc}{HTML}{6E6E6E}

\title{Hybrid Reasoning Agents}
\author[1,2*$\dagger$]{Jiaru Zou}
\author[3,4$\dagger$]{Yunzhe Qi}
\author[3$\dagger$]{Sirui Chen}
\author[2]{Federico Bianchi}
\author[2]{Shang Zhu}
\author[1]{Pan Lu}
\author[3]{Jingrui He}
\author[1,2]{James Zou}

\affil[1]{Stanford University }
\affil[2]{Together AI }
\affil[3]{UIUC }
\affil[4]{Meta\newline}
\affil[*]{Project Lead }
\affil[$\dagger$]{Core Contributor}

\renewcommand{\abstractprojectline}{{%
    \normalfont\fontsize{10}{12}\selectfont\color{black}%
    \hypersetup{urlcolor=[HTML]{2420cf}}%
    \makebox[1em][c]{\raisebox{-0.125em}{\includegraphics[height=0.95em]{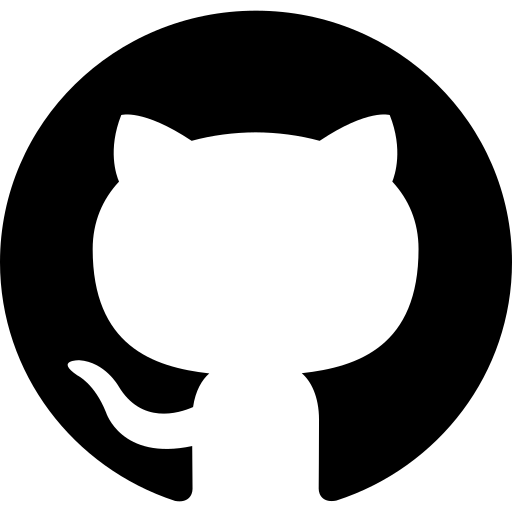}}}\hspace{0.25em}\textbf{Code:} \href{https://github.com/togethercomputer/HybridAgent}{https://github.com/togethercomputer/HybridAgent}\par
    \vspace{2pt}\noindent\makebox[1em][c]{\raisebox{0.05em}{\includegraphics[height=0.6em]{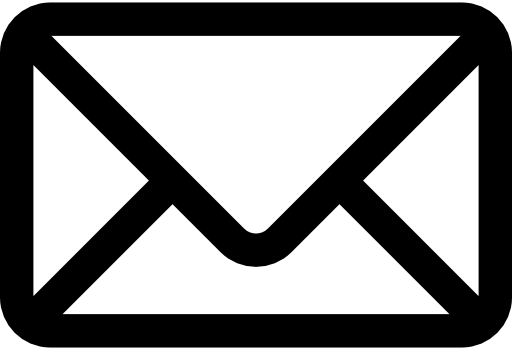}}}\hspace{0.25em}\textbf{Correspondence:} \href{mailto:jiaru@stanford.edu}{\textcolor[HTML]{2420cf}{jiaru@stanford.edu}}, \href{mailto:jamesz@stanford.edu}{\textcolor[HTML]{2420cf}{jamesz@stanford.edu}}%
}}
\begin{abstract}
How can we build an agent that thinks beyond human language, yet still acts through it in the external environment? Modern LLM agents still reason and interact through text, while latent reasoning offers an efficient internal medium but lacks native tool access and reasoning transparency. 
We bridge these two directions with \textbf{Hybrid Reasoning Agents (HybridAgents)}, which dynamically navigate between latent reasoning and text-based interaction throughout a unified generation process.
Architecturally, a lightweight HybridLink enables seamless transitions between text and latent modes, while the agent's continuous working memory preserves lossless hybrid context across multiple turns.
These designs preserve tool-integrated agentic capabilities while making latent thoughts directly inspectable.
Building on this architecture, we further develop dedicated post-training algorithms for hybrid reasoning: \textbf{HybridSFT} stabilizes tool-integrated hybrid generation by teaching the backbone to coordinate with HybridLink, then \textbf{HybridRL} scales reasoning by learning when to think in latent space and when to act through text.
We evaluate HybridAgent at 3 model scales across 11 benchmarks covering mathematics, science, medicine, search, and agentic reasoning. Compared with frontier LLM Agents and agentic frameworks spanning both text-based and latent methods, HybridAgent consistently improves performance by 10.4\%, while delivering 35.5\%--54.7\% token reduction, 1.8$\times$--2.6$\times$ faster inference, and 2.7$\times$--3.7$\times$ cost savings.
\end{abstract}

\begin{document}
\maketitle
\addtocontents{toc}{\protect\setcounter{tocdepth}{-1}}

\begin{figure}[!h]
    \centering
    \vspace{3pt}
    \includegraphics[width=\linewidth]{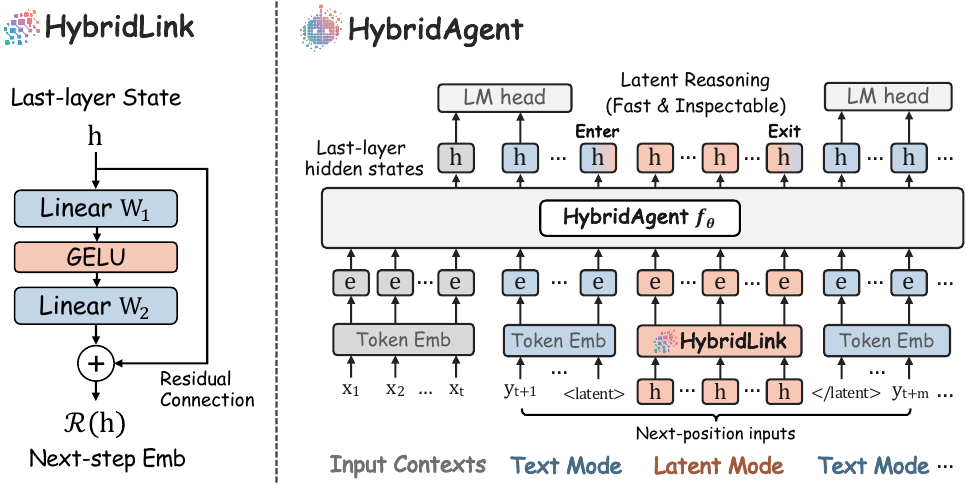}
    \caption{
    \textbf{\ours Architecture Overview.}
    \textcolor{latenttext}{\textbf{\textit{Left:}}} A \link maps last-layer hidden states back into the input embedding space, letting the agent enter latent mode for \textit{recurrent, fast, inspectable inner thinking} and return to text mode for \textit{tool calls and external monitoring}.
    \textcolor{textblue}{\textbf{\textit{Right:}}} \ours performs hybrid reasoning that thinks in latent space and acts through text within a single generation process.
    Building on this architecture, \ours \textcolor{latenttext}{\textbf{(i)}} achieves \textbf{multi-turn hybrid generation} by carrying the entire hybrid context through its own working memory (Section~\ref{sec:architecture}), and \textcolor{textblue}{\textbf{(ii)}} can be converted from any standard pre-trained LLM through a dedicated \textbf{post-training pipeline} (Section~\ref{sec:training}).
    }
    \label{fig:hybrid_agent}
    \vspace{-15pt}
\end{figure}

\newpage
\vspace{-2pt}
\section{Introduction}
\vspace{-2pt}

Frontier intelligence no longer scales through the model's reasoning alone, but depends increasingly on its ability to interact with the external world \citep{kimik3, kwa2026measuring, sun2026agents}.
To solve a hard problem, an LLM agent must reach beyond the model itself by continually invoking specialized tools, observing feedback from the environment, and adapting its next move \citep{tool_survey, silver2025welcome, feng2026retool}.
Beneath this expanding repertoire of agent harnesses lies a single lingua franca: \textit{text}, the common medium of both the agent's autoregressive generation and its environment interaction \citep{sumers2024cognitive, zhang2026darwin}. 
All intermediate reasoning trajectories are generated token by token as text before the agent can act on the environment.

A separate line of work on latent reasoning points in a different direction \citep{zhu2025survey, chen2025reasoning}.
By unfolding computation directly in continuous hidden space, latent models bypass the need to decode and verbalize every intermediate step, providing more efficient internal reasoning within the model's ``own language" \citep{hao2024training,wang2025monet,yu2026latent}.
However, today’s latent reasoners remain largely sealed behind the model boundary.
Unlike decoded text, continuous latent vectors (i) lack native pathways for tool invocation across multi-turn agentic reasoning \citep{chang2026unlocking}, and (ii) do not readily expose human-interpretable semantics, leaving them difficult to supervise during training \citep{chen2026what} and to audit or debug at inference \citep{korbak2025chain}.
These constraints keep latent reasoning models from operating as capable agents in their own right.

Taken together, agentic interaction and efficient inner reasoning have so far required a different medium each, leaving the two lines of work apart in the development of next-generation agents.
We believe this division can be closed by building a new type of agent that combines both strengths to \textit{think beyond text while still acting through it}. We call this unified design \textbf{\ours}, one that dynamically navigates between latent reasoning and text-based interaction within a single generation.

Building such an agent is hard, since it needs to handle both dense continuous vectors and sparse discrete tokens as hybrid inputs, and determine when to transition between text and latent modes to balance agentic actions and reasoning efficiency.
To offset the hardness, we first revisit the model architecture and extend its autoregressive generation interface to support hybrid inputs.
At the heart of this design is \textbf{\link}, a lightweight residual adapter that bridges latent thoughts from the last-layer hidden space back into the input embedding space and enables sequence-wise recurrence within latent mode.
Then, we realize multi-turn hybrid generation through the agent's own working memory, carrying complete hybrid trajectories of text, latent thoughts, and tool observations across turns.
Surprisingly, we find that \ours can render its thoughts in latent mode as human-readable pseudo-text, making latent reasoning transparent and directly inspectable rather than opaque.

To further unlock the potential of hybrid reasoning, we equip \ours with dedicated post-training algorithms.
\textbf{HybridSFT} first stabilizes tool-integrated hybrid generation through activation and self-conditioned training, which adapts the backbone to reason over latent thoughts produced by its own \link, enabling reliable coordination between the two components and closing the train-inference mismatch.
Then, \textbf{HybridRL} scales agentic reasoning capabilities by training \ours to adaptively balance latent reasoning and text-based interaction. To realize this, we enable policy optimization over hybrid trajectories through hybrid policy factorization and exact rollout replay.

We provide theoretical grounding for HybridAgent through complementary analyses of multi-turn context continuation and hybrid policy optimization. From an architectural standpoint, we show that continuous working memory losslessly preserves the entire hybrid context across different turns. From a learning perspective, the hybrid policy factorization and exact latent-context replay in HybridRL provide a well-founded basis for policy updates over complete hybrid trajectories.

Empirically, we evaluate \ours on 11 benchmarks spanning mathematics, science, medicine, knowledge-intensive search, and challenging agentic reasoning.
We instantiate and train \ours with Qwen3-Base models \citep{yang2025qwen3} across 4B, 8B, and 14B scales.
Compared with frontier LLM agents and agentic frameworks spanning both text-based and latent reasoning, \ours achieves an average performance improvement of 10.4\% and exhibits strong test-time scalability. In terms of efficiency, \ours reduces token usage by 35.5\%--54.7\%, delivers 1.8$\times$--2.6$\times$ inference speedup, and lowers computational cost by 2.7$\times$--3.7$\times$. 
Additional in-depth analyses of training dynamics, reasoning auditability, and agentic tool-use reliability reveal a clear division of labor between the two reasoning modes: \ours shifts more internal reasoning into semantically rich and interpretable latent thoughts, while reserving text for external inspection and tool interaction.

\section{Preliminary}
\label{sec:pre}

\textbf{Hybrid Generation in LLMs.} We consider an LLM agent $f_\theta$ built upon a standard Transformer parameterized by $\theta$. To enable hybrid generation, the agent operates autoregressively in two modes: (i) \textcolor{textblue}{\texttt{text}}, which performs explicit generation in the token space, and (ii) \textcolor[HTML]{C87945}{\texttt{latent}}, which performs implicit generation in the continuous representation space. Formally, given an input context $x$ with corresponding input embeddings $E=[e_1,\dots,e_t] \in \mathbb{R}^{t \times d_h}$, the agent computes the last-layer hidden state $h_t$ through a forward pass, and the hybrid input representation for the next step $t+1$ is:
\begin{equation}
\begin{aligned}
e_{t+1} =
\begin{cases}
    \operatorname{Emb}(y_{t+1}), 
    \quad y_{t+1} \sim \operatorname{Softmax}(W_{\mathrm{out}}h_t),
    & \text{if } \textcolor{textblue}{\texttt{text}} \text{ mode}, \\[2pt]
    h_t,
    & \text{if } \textcolor[HTML]{C87945}{\texttt{latent}} \text{ mode},
\end{cases}
\end{aligned}
\end{equation}
where $W_{\mathrm{out}}$ is the vocabulary projection matrix, $\operatorname{Emb}(\cdot)$ is the token embedding function, and $h_t$ is directly reused as the next latent input in standard latent reasoning \citep{hao2024training,cheng2024compressed,coda2025exploring}.
The next hidden state of $f_\theta$ is then computed as:
\begin{equation}
\begin{aligned}
h_{t+1}=f_\theta([E_{\le t};e_{t+1}]).
\end{aligned}
\end{equation}
Throughout inference, the hybrid agent dynamically switches between the two generation modes, producing a hybrid trajectory that balances the model's inner reasoning with external interaction.

\textbf{Tool-Integrated Agentic Reasoning.} A hybrid agent can leverage its text mode to interact with external environments.
Similar to standard LLM agents, the hybrid agent generates a tool call as a sequence of text, invokes the corresponding tool (e.g., Python, Search), and receives the resulting observations. A complete single-turn \textit{hybrid reasoning trajectory} $\tau$ can be represented as:

\begingroup
\setlength{\abovedisplayskip}{6pt}
\setlength{\belowdisplayskip}{6pt}
\setlength{\abovedisplayshortskip}{6pt}
\setlength{\belowdisplayshortskip}{6pt}
\begin{equation}
\label{eq:hybrid_trajectory}
\vcenter{\hbox{%
\resizebox{\dimexpr\linewidth-2.6em\relax}{!}{%
\begin{tikzpicture}[
    font=\sffamily,
    box/.style={
        rounded corners=2pt,
        minimum height=0.65cm,
        inner xsep=9pt,
        draw,
        line width=0.65pt,
        font=\ttfamily
    },
    textbox/.style={
        box,
        fill=textfill,
        draw=textborder,
        text=textblue
    },
    latentbox/.style={
        box,
        fill=latentfill,
        draw=latentborder,
        text=latenttext
    },
    toolbox/.style={
        box,
        fill=toolfill,
        draw=toolborder,
        text=tooltext
    },
    caption/.style={
        text=captiongray,
        font=\sffamily\small
    }
]

\node[textbox] (text1)
    {\texttt{[text]}};

\node[latentbox, right=0.28cm of text1] (latent)
    {\(\langle\)\texttt{latent}\(\rangle\)};

\node[right=0.18cm of latent, text=black] (dots)
    {\(\cdots\)};

\node[textbox, right=0.18cm of dots] (text2)
    {\texttt{[text]}};

\node[toolbox, right=0.65cm of text2] (call)
    {\texttt{[tool\_call]}};

\node[toolbox, right=2.65cm of call] (response)
    {\texttt{[tool\_response]}};

\coordinate (arrowmid) at
    ($(call.east)!0.5!(response.west)$);

\draw[->, line width=0.8pt, draw=black]
    ([xshift=-0.38cm, yshift=0.16cm]arrowmid)
    --
    ([xshift=0.38cm, yshift=0.16cm]arrowmid);

\node[
    caption,
    text=black,
    anchor=north,
    yshift=0.04cm
] at (arrowmid)
    {Tool Execution};

\coordinate (modelL) at
    ([yshift=-0.25cm]text1.south west);
\coordinate (modelR) at
    ([yshift=-0.25cm]text2.south east);

\draw[rulegray, line width=1pt]
    (modelL) -- (modelR);

\node[
    caption,
    text=black,
    font=\sffamily\normalsize,
    anchor=north,
    yshift=-0.16cm
] at ($(modelL)!0.5!(modelR)$)
    {Model-Side Hybrid Reasoning};

\coordinate (envL) at
    ([yshift=-0.25cm]call.south west);
\coordinate (envR) at
    ([yshift=-0.25cm]response.south east);

\draw[rulegray, line width=1pt]
    (envL) -- (envR);

\node[
    caption,
    text=black,
    font=\sffamily\normalsize,
    anchor=north,
    yshift=-0.16cm
] at ($(envL)!0.5!(envR)$)
    {Environment-Side Interaction};

\end{tikzpicture}%
}
}}
\end{equation}
\endgroup
where $[\cdot]$ and $\langle \cdot \rangle$ denote text chunks and latent vectors, respectively.
The whole trace is then appended as input for the next generation turn, and the process continues until a final text answer is produced.

\section{Building a Hybrid Reasoning Agent}
\label{sec:architecture}

We introduce \ours{}, a hybrid framework that dynamically navigates between explicit text and continuous latent reasoning within a single unified generation process.
As illustrated in Figure \ref{fig:hybrid_agent}, the architecture preserves an LLM agent's autoregressive generation and tool-calling interfaces while accommodating both text and latent inputs.
Below, we first present the hybrid interface for seamless text-latent mode transition (Section \ref{sec:mode_tran}), followed by multi-turn hybrid generation (Section \ref{sec:multi_turn_generate}).

\subsection{Hybrid Mode Interface}
\label{sec:mode_tran}

Existing latent reasoning models typically feed the last-layer hidden state $h_t$ directly back as the next-step input $e_{t+1}$. 
This design is not well suited to hybrid generation, as substituting $h_t$ for a token embedding creates a representation mismatch with the pretrained input space, making transitions between continuous latent states and discrete tokens difficult.
To support both latent and text modes in a shared input interface, we add a lightweight adapter \textit{\link} to work alongside the backbone.

\textbf{\link as a Bridge.} An LLM agent's last-layer hidden states naturally encode the semantics of its current generation, so the \link $\mathcal{R}$ is designed to carry this information from the hidden representation space back into the input embedding space.
Formally, given any last-layer hidden state $h_t$, \link simulates its corresponding next-step input embedding as:
\begin{equation}
\label{eq:hybridlink}
\begin{aligned}
    e_{t+1} = \mathcal{R}(h_t) = h_t + W_2\,\sigma\!\left(W_1 h_t\right),
\end{aligned}
\end{equation}
where $W_1, W_2$ are standard linear projections, $\sigma$ is an activation function (e.g., GELU).
As the residual connection largely preserves the original semantics of $h_t$, \link only needs to align the distributional difference rather than learn the entire projection from scratch. 
Accordingly, the link module is lightweight, with approximately $2d_h^2$ parameters (e.g., $\sim$10M parameters for a 4B LLM).

\textbf{Dynamic Text-Latent Mode Transition.} 
Next, we describe how \ours transitions between text and latent modes during generation, together with the entry and exit decisions for each mode.

In text mode, \ours behaves exactly as a standard LLM, sampling the next token from the vocabulary at each step. The process continues until the agent generates one special token \texttt{<latent>}, which serves as the entry anchor for the latent mode. For the next forward pass, the agent uses \link for latent reasoning in place of the LM head and token embedding layer.

\begin{figure}[!t]
    \centering
    \includegraphics[width=\linewidth]{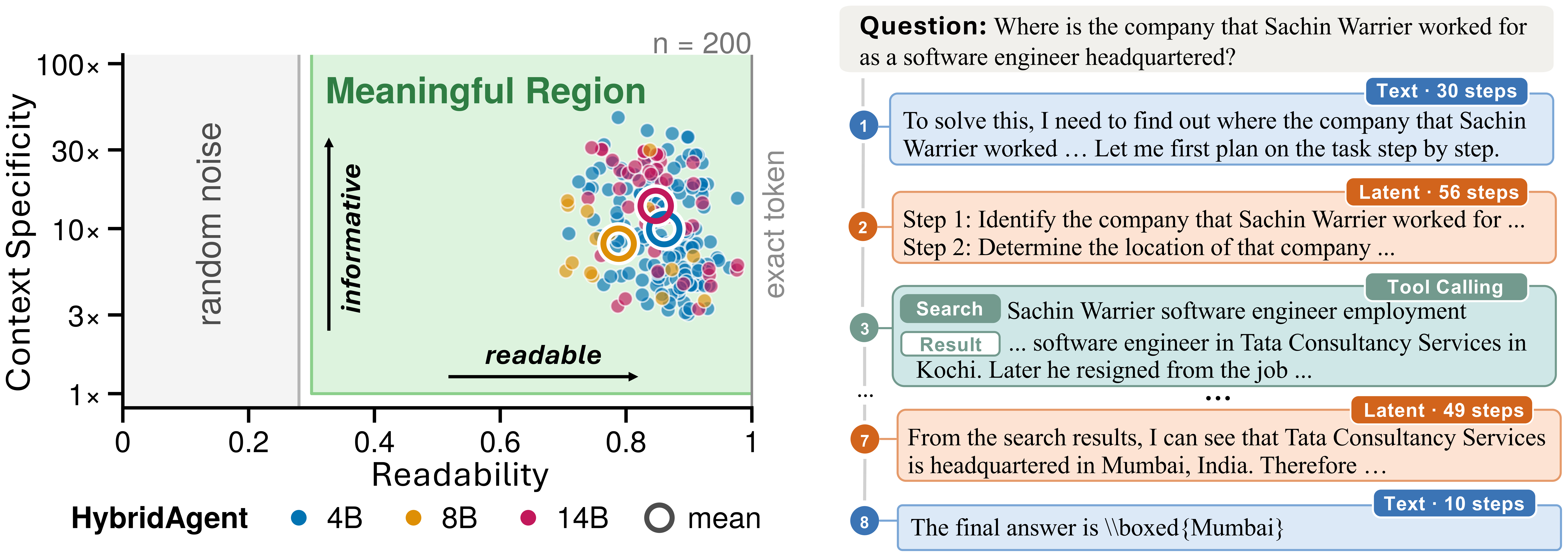}
    \caption{
    \textbf{Interpretable Latent Mode in \ours.} All latent steps can be expressed as readable pseudo-text carrying their semantics for external inspection.
    \textit{Left:} Quantitatively, latent thoughts are highly readable and context-specific (see Appendix \ref{app:latent_probe}).
    \textit{Right:} Qualitatively, latent pseudo-text stays semantically aligned with surrounding text reasoning and tool calls. (``Steps'' counts forward passes).
    }
    \label{fig:probe_method}
\end{figure}

In latent mode, each step's last-layer hidden state is passed through \link via \eqref{eq:hybridlink}, allowing \ours to continue its generation flow entirely in latent space without token decoding and re-embedding.
This sequence-wise recurrent computation over continuous representations extends the agent's effective reasoning depth, enabling more expressive internal reasoning.
The latent reasoning span terminates when the semantics of the last-layer hidden state $h_t$ align with the exit anchor \texttt{</latent>}. Specifically, at any latent step $t$, the latent mode closes by satisfying the condition:
\begin{equation}
\label{eq:latent-exit}
    \operatorname*{arg\,max}_{v \in \mathcal{V}} \; \cos\bigl(\mathcal{R}(h_t),\, \operatorname{Emb}(v)\bigr) = \texttt{</latent>}.
\end{equation}
\ours then returns to its text mode, and this transition continues during generation.
At each step, the agent adaptively decides whether to remain in its current mode or switch, allowing the length and frequency of text and latent spans to emerge dynamically rather than follow preset configurations.

\textbf{Interpretable Latent Mode.} During hybrid generation, a useful property of \ours is that its latent mode remains decoding-free yet directly interpretable. 
To inspect any latent reasoning span, one can easily recover its semantics by mapping each latent step produced by \link to its nearest token embedding, as in \eqref{eq:latent-exit}. We illustrate this interpretability property clearly in Figure \ref{fig:probe_method}.

\subsection{Multi-Turn Hybrid Generation over Working Memory}
\label{sec:multi_turn_generate}

In standard LLM agents, each turn's generated content and tool outputs are appended to the textual context and re-encoded as the next turn's input prefix.
Hybrid trajectories, however, interleave text tokens with latent vectors and can't be reconstructed through conventional tokenization alone. Instead, \ours carries the complete context across turns through its continuous working memory.

\begin{figure}[!t]
    \centering
    \includegraphics[width=\linewidth]{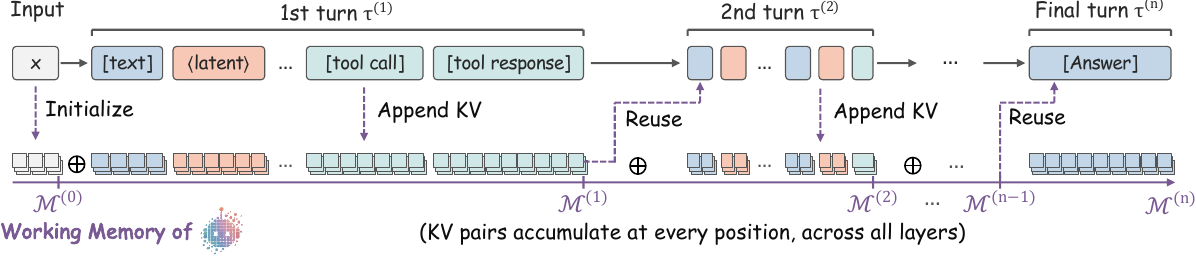}
    \caption{\textbf{Multi-Turn Hybrid Generation over \ours's Own Working Memory.} \ours accumulates key-value (KV) entries from text, latent thoughts, and tool observations in continuous working memory, directly reusing and updating it across turns to preserve the full hybrid context.}
    \label{fig:KV}
\end{figure}

\textbf{Cross-Turn Working Memory.}
Recall that a single-turn hybrid trajectory $\tau$ in \eqref{eq:hybrid_trajectory} consists of model-generated text, latent thoughts, and external tool interactions.
During input construction, all contents are mapped into the same input space through either the input embedding layer or \link. 
This shared space allows the Transformer to process all three components uniformly and collect their KV entries throughout generation.
We leverage these accumulated KV entries as the \ours's continuous working memory to achieve multi-turn hybrid generation.

As illustrated in Figure \ref{fig:KV}, \ours first initializes its working memory $\mathcal{M}$ from the input context $x$ and then accumulates the KV entries generated throughout the first-turn trajectory $\tau^{(1)}$, i.e.,
\begin{equation}
\label{eq:working_memory}
\begin{aligned}
\mathcal{M}^{(0)} &= \bigl[K_{\le |x|},\, V_{\le |x|}\bigr],\\
\mathcal{M}^{(1)} &= \mathcal{M}^{(0)} \oplus \bigl[K_{\le t},V_{\le t}\bigr],
\quad t = 1,\dots,|\tau^{(1)}|.
\end{aligned}
\end{equation}
Here, $\mathcal{M}^{(1)}$ preserves the complete first-turn hybrid context in positional order across all Transformer layers.
For each subsequent turn, the agent directly resumes generation from the current working memory and updates it by appending newly generated KV entries at each step.
This process continues across the sequence of hybrid trajectories $\{\tau^{(1)},\tau^{(2)},\dots,\tau^{(n)}\}$ until the final answer is produced.
We further provide a formal justification for this continuation process in Appendix~\ref {app:theorey_memory}.
Overall, multi-turn hybrid generation proceeds over one continuous working memory flow, preserving the complete hybrid context across turns without requiring intermediate token-space reconstruction.

\section{Learning to Perform Hybrid Reasoning}
\label{sec:training}

With the overall framework in place, we now turn to training. 
As \ours leaves the backbone architecture intact, no pre-training from scratch is required. Instead, we adapt the standard agentic post-training pipeline of supervised fine-tuning and reinforcement learning to the hybrid setting. The two training stages serve complementary roles: SFT teaches the agent to coordinate with \link and instantiate reliable tool use for stable hybrid generation (Section \ref{sec:HybridSFT}), while RL further scales reasoning by learning when to think in latent space and when to act through text (Section \ref{sec:hybridrl}).

\begin{figure}[!t]
\centering
\includegraphics[width=\linewidth]{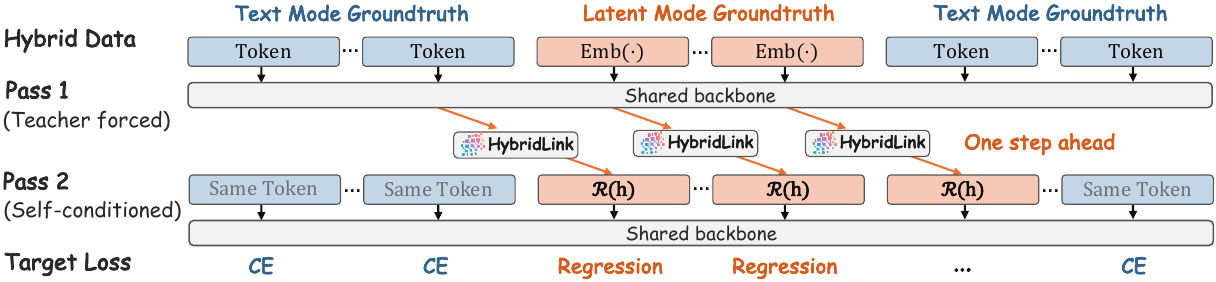}
\caption{
\textbf{Self-Conditioned HybridSFT.}
A teacher-forced first pass produces next-step latent inputs via \link. A second pass replaces ground-truth latent embeddings Emb$(\cdot)$ with the predictions $\mathcal{R}(h)$ and jointly trains the backbone and \link using CE at text and regression at latent positions.
}
\label{fig:hybrid_SFT}
\end{figure}

\subsection{Stabilize Generation via Hybrid Supervised Fine-tuning}
\label{sec:HybridSFT}

As a first step toward stabilizing hybrid reasoning in \ours, we design the entire HybridSFT stage through a hybrid data curation pipeline and a staged training procedure.

\textbf{Hybrid Data Recipe.} To curate high-quality training data, we filter existing text-based agentic SFT datasets from multiple sources and convert them into hybrid trajectories through automatic annotation.
We assign reasoning steps to latent mode based on both LLM Experts' judgments and target models' own prediction entropy, while retaining a small set of key steps for tool interactions and external debugging/auditing in text mode (see Appendix \ref{app:annotation} for data curation details).

\textbf{Format and \link Activation.}
As a warm start, we adapt the backbone to the hybrid data format and initialize the \link.
For format activation, we perform standard text SFT to teach the backbone when to enter and exit latent mode by emitting the two special anchor tokens.
We then freeze the backbone and train \link to align last-layer hidden states at latent positions with their target input embeddings. For a training instance $(x,y)\in \mathcal{D}_\text{SFT}$, the activation objectives are:
\begin{equation}
\label{eq:hybridsft_activation}
\begin{aligned}
&\mathcal{L}_{\text{Backbone}} = \mathrm{CE}\bigl(f_\theta(x),\, y\bigr),\\
&\mathcal{L}_{\text{\link}} = \lambda_{\cos}\Bigl(1 - \cos\bigl(\mathcal{R}(h_\text{lat}),\, \operatorname{Emb}(y_\text{lat})\bigr)\Bigr) + \lambda_{\mathrm{mse}}\bigl\lVert \mathcal{R}(h_\text{lat}) - \operatorname{Emb}(y_\text{lat}) \bigr\rVert_2^2,
\end{aligned}
\end{equation}
where $h_{\mathrm{lat}}$ is the last-layer hidden states at latent positions, $y_{\mathrm{lat}}$ is the corresponding ground-truth text tokens, and $\lambda_{\cos},\lambda_{\mathrm{mse}}$ balance directional and magnitude alignment, respectively.

\textbf{Self-Conditioned HybridSFT.}
The activation provides a well-aligned initialization, but the backbone has not yet learned to reason over latent thoughts produced by its own \link (see Appendix~\ref{app:self_condition} for details on this train-inference mismatch).
We therefore adopt a \textit{two-pass self-conditioned training} scheme as illustrated in Figure~\ref{fig:hybrid_SFT}.
In the first pass, \ours is teacher-forced on the ground-truth hybrid sequence. \link maps the generated last-layer hidden states at latent positions one step ahead to form latent inputs for the next pass.
In the second pass, \ours processes the same sequence but replaces the ground-truth latent embeddings with its own latent inputs. We jointly supervise text and latent positions on the second pass as:
\begin{equation}
\label{eq:hybridsft_twopass}
    \mathcal{L}_{\text{HybridSFT}}
    = \mathcal{L}_{\text{Backbone}}
    + \mathcal{L}_{\text{\link}},
\end{equation}
where the CE loss $\mathcal{L}_{\text{Backbone}}$ and regression loss $\mathcal{L}_{\text{\link}}$ are applied only at text and latent positions, respectively.
Overall, HybridSFT trains the agent to coordinate its backbone with \link for coherent reasoning across text and latent modes, supporting stable hybrid generation at inference.

\subsection{Scale Reasoning via Hybrid Reinforcement Learning}
\label{sec:hybridrl}

\begin{figure}[!t]
\centering
\begin{minipage}[t]{\textwidth}

\begin{algorithm}[H]
\caption{Hybrid Reinforcement Learning (HybridRL)}
\label{alg:hybrid_rl}

\begingroup
\algrenewcommand\algorithmicindent{0.8em}

\begin{algorithmic}[1]
\Require Agent $f_\theta$ (SFT initialized in Section \ref{sec:HybridSFT}), \link $\mathcal{R}$, task set $\mathcal{D}_\text{RL}$, group size $G$, $\eta$
\For{each training step}
    \State Sample $x \sim \mathcal{D}_{\mathrm{RL}}$;
Collect $\{\tau_i\}_{i=1}^{G} \leftarrow f_\theta(x)$
    \Comment{Hybrid Policy Rollout}

    \State Record $\tau_i$ with latent $z_i=\mathcal{R}(H_i)$ (\eqref{eq:hybridlink}), text $y_i$, and tool observations $o_i$  

    \State Compute rewards $r_i$ (\eqref{eq:rl_reward}) and group-normalized advantages $A_i$

    \State Replay $\tau_i$ with $(z_i,y_i,o_i)$ to obtain $\pi_\theta(y_{i,t}\mid\tau_{i,<t})$
    \Comment{Exact Rollout Replay}

    \State Compute $\rho_{i,t}$ via \eqref{eq:rl_ratio} and \eqref{eq:rl_factorization}
    \Comment{Hybrid Policy Factorization}

    \State Update $\theta \leftarrow \theta-\eta\nabla_\theta\mathcal{L}_{\mathrm{RL}}$ (standard RL objective)
    \Comment{Hybrid Policy Update}

\EndFor
\end{algorithmic}
\endgroup
\end{algorithm}
\end{minipage}
\end{figure}

A \ours cannot straightforwardly adopt popular RL algorithms such as PPO \citep{schulman2017proximal} or GRPO \citep{shao2024deepseekmath}, as they assume every generated position carries a token-level likelihood.
Specifically, at each position $t$, standard RL computes the per-token importance ratio as:
\begin{equation}
\label{eq:rl_ratio}
\begin{aligned}
\rho_t = \frac{\pi_\theta\bigl(y_t \mid y_{<t}\bigr)}{\pi_{\theta_{\mathrm{old}}}\bigl(y_t \mid y_{<t}\bigr)},
\end{aligned}
\end{equation}
where $\pi_\theta, \pi_{\theta_{\mathrm{old}}}$ denote the current and rollout policies.
At latent positions, however, \ours generates continuous vectors rather than sampled tokens, leaving no token-level likelihood. Thus, the importance ratio $\rho_t$ is undefined.
We address this incompatibility with two simple components.

\textbf{Hybrid Policy Factorization.}
Recall that, in a hybrid trajectory, latent thoughts are computed entirely in the continuous latent space through \link, and tool responses are the masked observations from the environment. 
Thus, policy sampling during rollout is confined to model-generated text positions.
To justify why policy optimization needs only to operate over these sampled positions, we formalize the \textit{hybrid policy factorization} below with detailed theoretical analyses.
\vspace{4pt}
\begin{proposition}[Hybrid Policy Factorization]
\label{prop:factorization}
Let $\tau$ be a hybrid trajectory generated by the policy $\pi_\theta$ (induced by $f_\theta$) with fixed \link $\mathcal{R}$. Then the hybrid policy factorizes as:
\begin{equation}
\label{eq:rl_factorization}
\begin{aligned}
    \pi_\theta(\tau \mid x) \;=\; \prod_{t \in \mathcal{T}} \pi_\theta\bigl(y_t \mid x,\, \tau_{<t}\bigr),  
\end{aligned}
\end{equation}
where $\mathcal{T}$ is the set of sampled text positions of trajectory $\tau$.
\end{proposition}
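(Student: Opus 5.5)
We prove the factorization by writing $\pi_\theta(\tau\mid x)$ with the chain rule over positions of $\tau$, and then showing that every factor at a non-text position is a point mass at the realized value, so it contributes $1$. First we fix the setup. Write the trajectory as a sequence $\tau=(u_1,\dots,u_{|\tau|})$. Each position $t$ falls into exactly one of three disjoint classes: a sampled text position $t\in\mathcal{T}$ with $u_t=y_t$; a latent position $t\in\mathcal{L}$ with $u_t=z_t$; or a tool-observation position $t\in\mathcal{O}$ with $u_t=o_t$. By the hybrid generation rule of Section~\ref{sec:pre} together with \eqref{eq:hybridlink}, the mode at step $t$ is a deterministic function of the prefix $(x,\tau_{<t})$. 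Entry into latent mode is triggered by the sampled anchor token \texttt{<latent>}, which is itself a text position. Exit from latent mode is given by the deterministic rule \eqref{eq:latent-exit}. Tool-call boundaries are likewise marked by sampled text tokens.

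Given this, the chain rule gives
\[
\pi_\theta(\tau\mid x)=\prod_{t\in\mathcal{T}}\pi_\theta(y_t\mid x,\tau_{<t})\prod_{t\in\mathcal{L}}p_\theta(z_t\mid x,\tau_{<t})\prod_{t\in\mathcal{O}}p(o_t\mid x,\tau_{<t}).
\]
The three products are handled as follows.

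\emph{Latent positions.} Here $z_t=\mathcal{R}(h_{t-1})$, and $h_{t-1}=f_\theta([E_{\le t-1}])$ is a deterministic function of $x$ and $\tau_{<t}$ (with $\mathcal{R}$ fixed). So the conditional law of $z_t$ is the Dirac measure $\delta_{\mathcal{R}(h_{t-1})}$, and it assigns mass $1$ to the realized value.

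\emph{Observation positions.} The tool outputs are produced by the environment, not by $\pi_\theta$. We therefore interpret $\pi_\theta(\tau\mid x)$ as the policy's part of the trajectory likelihood, i.e.\ conditioned on the observations, which are masked. If the tool is stochastic, its kernel does not depend on $\theta$: it cancels in any ratio $\pi_\theta/\pi_{\theta_{\mathrm{old}}}$ and contributes nothing to $\nabla_\theta\log\pi_\theta$. We state explicitly that this is the sense in which the identity holds.

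\emph{Mode switches.} Because the mode indicator at each step is a deterministic function of the prefix, it adds no extra factors. The set $\mathcal{T}$ is itself determined by $\tau$.

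The main obstacle is making the "probability" of continuous latent vectors rigorous. Densities do not exist for Dirac measures, so we cannot literally multiply factors equal to $1$. To handle this we will define the trajectory measure on the product space. Its reference measure is counting measure on text coordinates, and on latent coordinates it is the pushforward under the deterministic map. With this choice the latent coordinates are a measurable function of the discrete text coordinates and the observations.

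The cleanest route is induction on trajectory length. Show that the map from (text tokens, observations) to full trajectories is injective, so the probability of $\tau$ equals the probability of its text-token subsequence given the observations. Then apply the ordinary discrete chain rule over $\mathcal{T}$. In that chain rule, each conditional $\pi_\theta(y_t\mid x,\tau_{<t})=\operatorname{Softmax}(W_{\mathrm{out}}h_{t-1})_{y_t}$ is evaluated on the full hybrid prefix, which is reconstructed exactly. This exact reconstruction is precisely what the exact rollout replay of Algorithm~\ref{alg:hybrid_rl} supplies.
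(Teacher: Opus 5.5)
Your proposal is correct and follows essentially the same route as the paper: apply the chain rule over the positions of $\tau$ and note that latent continuations $z_t=\mathcal{R}(h_{t-1})$ and mode transitions carry conditional mass $1$, which leaves only the sampled text factors. Your extra care is a sound refinement of points the paper leaves implicit. You treat tool observations as environment-generated context outside $\pi_\theta$, which the paper handles by defining $\pi_\theta(\tau\mid x)$ as the likelihood of the agent's outputs only. You also show that $(y,o)\mapsto\tau$ is injective, so the trajectory law is discrete and point probabilities, rather than densities, are legitimate.
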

\vspace{4pt}
The proof and additional theoretical justifications are provided in Appendix \ref{app:theorey_rl}.
Proposition~\ref{prop:factorization} shows that likelihood-ratio updates over sampled text positions are sufficient to optimize the full hybrid trajectory, making the importance ratio in \eqref{eq:rl_ratio} well-defined for the \ours policy.

\textbf{Policy Update via Exact Rollout Replay.}
Policy optimization generally involves rebuilding a differentiable computation graph for the rollouts.
Applying this to \ours further requires recomputing sampled text-token likelihoods conditioned on the same preceding latent thoughts used during rollout.
We satisfy this requirement via \textit{exact latent-context replay}.
During hybrid rollout, we record latent thoughts alongside sampled text tokens and tool observations.
During policy updates, we replay the full hybrid trajectory in embedding space with a single differentiable forward pass, keeping the recorded latent inputs unchanged while recomputing the sampled likelihoods in \eqref{eq:rl_factorization}.

Algorithm \ref{alg:hybrid_rl} outlines the full HybridRL training procedure. We keep the \link fixed to preserve deterministic latent transitions.
For the reward design, we simply formulate it as:
\begin{equation}
\label{eq:rl_reward}
r(\tau) = r_{\mathrm{correct}} + r_{\mathrm{format}} + r_{\mathrm{tool}},
\end{equation}
where $r_{\mathrm{correct}}$ measures answer correctness, $r_{\mathrm{format}}$ penalizes invalid formatting, and $r_{\mathrm{tool}}$ encourages multi-tool use.
We impose no explicit reward for mode usage, allowing \ours to adaptively balance text and latent generation and discover effective hybrid reasoning strategies during RL.

\section{Empirical Evaluations}
\label{sec:exps}

\begin{table*}[!t]
\centering
\caption{
\textbf{Main Results of \ours over Agentic Reasoning Tasks.} We report GAIA and WebWalker by difficulty level, HLE by category (NS/CE/SF), SealQA on the Seal-0 (``S-0") and Seal-Hard (``Hard") splits, and DSQA by full-correct rate (``Acc.") and F1 score. Across 5 independent runs, the average standard deviations of \ours are $\pm$0.014 in accuracy, with a corresponding 95\% CI of $\pm$0.017. At 8B and 14B scales, \ours consistently outperforms advanced LLM agents, agentic frameworks, and matched text and latent baselines. Best performance is \highlightbox{\textbf{highlighted}}.
}
\vspace{-8pt}
\label{tab:main_agentic}
\setlength{\tabcolsep}{3pt}
\resizebox{\textwidth}{!}{
\begin{tabular}{l cccc|cccc|cccc|cc|cc}
\toprule
\multirow{2}{*}{\textbf{Method}}
  & \multicolumn{4}{c|}{\textbf{GAIA}}
  & \multicolumn{4}{c|}{\textbf{WebWalker}}
  & \multicolumn{4}{c|}{\textbf{HLE}}
  & \multicolumn{2}{c|}{\textbf{SealQA}}
  & \multicolumn{2}{c}{\textbf{DSQA}} \\
\cmidrule(lr){2-5} \cmidrule(lr){6-9} \cmidrule(lr){10-13} \cmidrule(lr){14-15} \cmidrule(lr){16-17}
  & Lv.1 & Lv.2 & Lv.3 & Avg.
  & Easy & Med. & Hard & Avg.
  & NS & CE & SF & Avg.
  & S-0 & Hard
  & Acc. & F1 \\
\midrule
\noalign{%
  \begingroup
  \color{gray!10}%
  \vskip-\belowrulesep
  \hrule height\belowrulesep
  \endgroup
}
\rowcolor{gray!10}
\multicolumn{17}{l}{%
  \raisebox{0.5pt}[\height][\depth]{\textbf{Frontier LLM Agents}}%
} \\
QwQ-32B & 30.9 & 6.5 & 5.2 & 18.9 & 7.5 & 2.1 & 4.6 & 4.3 & 11.5 & 7.3 & 5.2 & 9.6 & 1.8& 10.6 & 11.3 & 23.9 \\
GLM-5.3-Flash & 38.5 & 28.8 & 8.3 & 30.1 & 8.9 & 8.3 & 9.9 & 9.0 & 13.3 & 14.7 & 16.8 & 14.2 & 9.0 & 22.0 & 16.0 & 38.5 \\
DeepSeekV4.1-Flash & 43.6 & 42.3 & 16.7 & 39.8 & 22.2 & 16.7 & 12.7 & 16.5 & 28.5 & 25.3 & 30.5 & 28.4 & 14.4 & 33.5 & 19.3 & 38.6 \\
Kimi-K3 & 46.2 & 42.3 & 25.0 & 41.7 & 11.1 & 11.9 & 7.0 & 10.0 & 29.4 & 32.0 & 37.9 & 31.4 & 11.7 & 28.0 & 21.7 & 46.0 \\
GPT-5.6-Luna & 48.7 & 32.7 & 25.0 & 37.9 & 15.6 & 8.3 & 11.3 & 11.0 & 30.9 & 25.3 & 25.3 & 29.0 & 11.7 & 29.1 & 23.3 & 49.4 \\

\midrule
\noalign{%
  \begingroup
  \color{gray!10}%
  \vskip-\belowrulesep
  \hrule height\belowrulesep
  \endgroup
}
\rowcolor{gray!10}
\multicolumn{17}{l}{%
  \raisebox{0.5pt}[\height][\depth]{\textbf{Agentic Frameworks}}%
} \\
Search-o1 & 46.2 & 23.1 & 0.0 & 29.1 & 11.1 & 21.4 & 16.9 & 17.5 & 6.4 & 4.0 & 10.5 & 6.8 & 3.6 & 11.8 & 12.3 & 25.3 \\
WebThinker & 48.7 & 26.9 & 8.3 & 33.0 & 13.3 & 23.8 & 18.3 & 19.5 & 7.0 & 4.0 & 9.5 & 7.0 & 4.5 & 12.6 & 11.7 & 24.7 \\
ReAct & 48.7 & 25.0 & 8.3 & 32.0 & 11.1 & 20.2 & 12.7 & 15.5 & 5.8 & 5.3 & 10.5 & 6.6 & 7.2 & 15.0 & 14.6 & 32.3 \\

\midrule
\noalign{%
  \begingroup
  \color{gray!10}%
  \vskip-\belowrulesep
  \hrule height\belowrulesep
  \endgroup
}
\rowcolor{gray!10}
\multicolumn{17}{l}{%
  \raisebox{0.5pt}[\height][\depth]{\textbf{8B-level Latent \& Text Methods}}%
} \\
Qwen3-8B (thinking) & 28.1 & 15.4 & \underline{16.7} & 20.4 & 0.0 & 2.4 & 2.8 & 2.0 & 3.9 & 2.7 & 8.4 & 4.6 & 2.7 & 5.9	& 3.3 & 14.6 \\
Coconut & 7.7 & 1.9 & 0.0 & 3.9 & 0.0 & 1.2 & 0.0 & 0.5 & 1.5 & 0.0 & 1.1 & 1.2 & 0.0 & 1.6 & 1.7 & 6.1 \\
SoftThinking & 23.1 & 9.6 & 8.3 & 14.6 & 0.0 & 1.2 & 1.4 & 1.0 & 3.6 & 2.7 & 6.3 & 4.0 & 0.0 & 3.9 & 2.3 & 11.2 \\
GRPO & 48.7 & 25.0 & 8.3 & 32.0 & 28.9 & 32.1 & \underline{28.2} & 30.0 & \underline{7.9} & 4.0 & 10.5 & 7.8 & 15.3 & 23.6 & 17.3 & 33.8 \\
ARPO & \gaincell{53.9} & \underline{32.7} & \underline{16.7} & \underline{38.8} & \underline{31.1} & \underline{35.7} & \underline{28.2} & \underline{32.0} & 7.3 & \underline{6.7} & \underline{15.8} & \underline{8.8} & \underline{18.9}	& \underline{26.0} & \underline{18.0} & \underline{37.2} \\
\textbf{\ours (8B)} & \underline{51.3} & \gaincell{44.2} & \gaincell{25.0} & \gaincell{43.7} & \gaincell{48.9} & \gaincell{45.2} & \gaincell{35.2} & \gaincell{42.5} & \gaincell{14.2} & \gaincell{8.0} & \gaincell{20.0} & \gaincell{14.4} & \gaincell{22.5} & \gaincell{35.8} & \gaincell{20.0} & \gaincell{40.7} \\[-0.4ex]

\midrule
\noalign{%
  \begingroup
  \color{gray!10}%
  \vskip-\belowrulesep
  \hrule height\belowrulesep
  \endgroup
}
\rowcolor{gray!10}
\multicolumn{17}{l}{%
  \raisebox{0.5pt}[\height][\depth]{\textbf{14B-level Latent \& Text Methods}}%
} \\
Qwen3-14B (thinking) & 33.3 & 13.5 & 0.0 & 19.4 & 6.7 & 2.4 & 4.2 & 4.0 & 5.5 & 6.7 & 11.6 & 6.8 & 3.6 & 7.9 & 4.0 & 15.1 \\
Coconut & 10.3 & 5.8 & 0.0 & 6.8 & 2.2 & 2.4 & 1.4 & 2.0 & 3.9 & 4.0 & 5.3 & 4.2 & 0.0 & 2.8 & 2.0 & 7.5 \\
SoftThinking & 23.1 & 11.5 & 8.3 & 15.5 & 6.7 & 3.6 & 7.0 & 5.5 & 3.9 & 5.3 & 9.5 & 5.2 & 2.7 & 6.3 & 5.7 & 15.4 \\
GRPO & 51.3 & 34.6 & 0.0 & 36.9 & 35.6 & 42.9 & 35.2 & 38.5 & 7.9 & 6.7 & 12.6 & 8.6 & \underline{17.1} & \underline{24.4} & \underline{18.6} & \underline{43.9} \\
ARPO & \gaincell{56.4} & \underline{40.4} & \underline{16.7} & \underline{43.7} & \underline{40.0} & \underline{44.1} & \underline{36.6} & \underline{40.5} & \underline{10.3} & \underline{10.7} & \underline{13.7} & \underline{10.0} & \underline{17.1} & 22.8 & 16.3 & 38.2 \\
\textbf{\ours (14B)} & \underline{53.8} & \gaincell{55.8} & \gaincell{33.3} & \gaincell{52.4} & \gaincell{48.9} & \gaincell{52.4} & \gaincell{46.5} & \gaincell{49.5} & \gaincell{16.1} & \gaincell{13.3} & \gaincell{17.9} & \gaincell{16.0} & \gaincell{24.3} & \gaincell{37.8} & \gaincell{21.7} & \gaincell{46.8} \\[-0.4ex]
\bottomrule
\end{tabular}}
\vspace{-4pt}
\end{table*}

\textbf{Tasks and Datasets.} We conduct comprehensive evaluations of \ours on 11 benchmarks across various domains:
(i) \textit{Mathematical Reasoning}, including AIME 25 \citep{aime25}, AIME 26 \citep{aime26}, and MATH500 \citep{math500};
(ii) \textit{Knowledge-Intensive Reasoning}, including GPQA-Diamond \citep{gpqa}, MedQA \citep{medqa}, and MuSiQue \citep{musique}; 
and (iii) \textit{Agentic Reasoning}, including General AI Assistants (GAIA)  \citep{mialon2024gaia}, Humanity's Last Exam (HLE) \citep{hle}, WebWalker \citep{webwalker}, SealQA \citep{pham2026sealqa}, and DeepSearchQA (DSQA) \citep{dsqa}. We adopt the standard evaluation metric for each dataset. Additional benchmark details are in Appendix \ref{app:benchmark}.

\textbf{Baselines.}
For baseline comparisons, we evaluate \ours against
(i) \textit{Text-based Agentic Methods}, covering standard text SFT (with tools) and multi-turn tool-integrated RL methods such as GRPO \citep{shao2024deepseekmath} and ARPO \citep{dong2025agentic};
(ii) \textit{Latent Reasoning Methods}, including SoftThinking \citep{zhang2025soft}, Coconut \citep{hao2024training}, and SoftCoT \citep{xu-etal-2025-softcot};
(iii) \textit{Frontier LLM Agents}, including GPT-5.6-Luna \citep{openai_gpt56_luna}, Kimi-K3 \citep{kimik3}, GLM-5.3-Flash \citep{glm5}, DeepSeek-V4.1-Flash \citep{deepseekv41flash}, QwQ-32B \citep{qwq32b};
and (iv) \textit{Agentic Frameworks}, including ReAct \citep{yao2023react}, Search-o1 \citep{search-o1}, and WebThinker \citep{li2025webthinker}.
Additional baseline details are provided in Appendix~\ref{app:baseline}.

\textbf{Models and Training Setups.} We instantiate \ours with Qwen3-Base backbones \citep{yang2025qwen3} at three scales of 4B, 8B, and 14B. We then apply HybridSFT with 54K hybrid trajectories curated through our automatic annotation pipeline, followed by HybridRL with 12K deep reasoning examples \citep{yu2025dapo} and 1K agentic search examples \citep{dong2025agentic}. During RL, we set rollout batches of 512 trajectories (\(64\times8\)), followed by one optimization pass over 8 minibatches.

\textbf{Implementation Details.} \ours interacts with external environments primarily through two types of tools: a Python interpreter within sandbox environments and a live search engine (e.g., Tavily). We cap tool usage at 10 calls per trajectory and retrieve the top 10 snippets for each search query.
As current serving engines do not support multi-turn hybrid generation,
we extend the training and inference infrastructure to enable text, latent, and tool-calling transitions and persistent KV working-memory transfer.
RL rollouts use a sampling temperature of 1.0, and evaluation uses temperature 0.6, top-p 0.95, and top-k 20.
We conduct experiments on H100 GPUs using five independent seeds.
Additional training/inference details and hyperparameter setups are provided in Appendix \ref{app:imple_setup}.

\begin{table*}[!t]
\vspace{-5pt}
\centering
\caption{
\textbf{Main Results of \ours on Math and Knowledge-Intensive Search.} We evaluate performance across both SFT and RL stages. Small-size benchmarks like AIME25/26 are reported on avg@16 accuracy. Across all tasks, \ours yields an average std. of $\pm$0.031 (math) and $\pm$0.018 (search), with corresponding 95\% CI of $\pm$0.038 and $\pm$0.022.
Under the same training budgets and model setups, HybridSFT and HybridRL improve both text-based and latent reasoning methods.
}
\vspace{-8pt}
\label{tab:main_others}
\setlength{\tabcolsep}{4pt}
\resizebox{0.9\textwidth}{!}{%
\begin{tabular}{lcccccc}
\toprule
\textbf{Method}
& \textbf{AIME25} & \textbf{AIME26} & \textbf{MATH500}
& \textbf{GPQA-D} & \textbf{MedQA} & \textbf{MuSiQue} \\

\midrule
\noalign{%
  \begingroup
  \color{gray!10}%
  \vskip-\belowrulesep
  \hrule height\belowrulesep
  \endgroup
}
\rowcolor{gray!10}
\multicolumn{7}{l}{%
  \raisebox{0.5pt}[\height][\depth]{\textbf{Qwen3-8B-Base}}%
} \\
Coconut & 13.1 & 9.6 & 51.4 & 26.8 & 61.7 & 14.0 \\
SoftCoT & 16.5 & 17.9 & 63.6 & 28.9 & 62.0 & 19.5 \\
Standard Text SFT & 30.0 & 28.3 & 78.2 & 32.3 & 69.7 & 23.5 \\
\textbf{\ours (SFT)} & 30.8 & 29.2 & 78.4 & 31.8 & 70.0 & 26.5 \\
\midrule
GRPO & 36.3 & 39.4 & 82.0 & 43.9 & 74.3 & 27.5 \\
ARPO & \underline{37.1} & \underline{41.5} & \underline{82.8} & \underline{46.5} & \underline{74.7} & \underline{28.5} \\
\textbf{\ours (RL)} & \gaincell{39.2} & \gaincell{43.3} & \gaincell{84.6} & \gaincell{48.0} & \gaincell{75.3} & \gaincell{32.0} \\[-0.4ex]

\midrule
\noalign{%
  \begingroup
  \color{gray!10}%
  \vskip-\belowrulesep
  \hrule height\belowrulesep
  \endgroup
}
\rowcolor{gray!10}
\multicolumn{7}{l}{%
  \raisebox{0.5pt}[\height][\depth]{\textbf{Qwen3-14B-Base}}%
} \\
Coconut & 16.3 & 15.2 & 61.2 & 29.3 & 65.3 & 17.5 \\
SoftCoT & 22.7 & 23.8 & 74.0 & 36.4 & 67.7 & 20.0 \\
Standard Text SFT & 30.8 & 31.7 & 83.8 & 43.9 & 78.0 & 28.0 \\
\textbf{\ours (SFT)} & 32.5 & 31.7 & 84.2 & 45.5 & 78.7 & 29.5 \\
\midrule
GRPO & 40.8 & 42.5 & \underline{87.4} & 52.5 & \underline{81.0} & \underline{32.0} \\
ARPO & \underline{42.1} & \underline{43.3} & 87.0 & \underline{54.0} & 80.3 & 31.5 \\
\textbf{\ours (RL)} & \gaincell{43.3} & \gaincell{45.8} & \gaincell{88.2} & \gaincell{55.1} & \gaincell{81.7} & \gaincell{34.5} \\[-0.4ex]
\bottomrule
\end{tabular}}
\vspace{-4pt}
\end{table*}

\begin{figure}[!t]
    \centering
    \includegraphics[width=\linewidth]{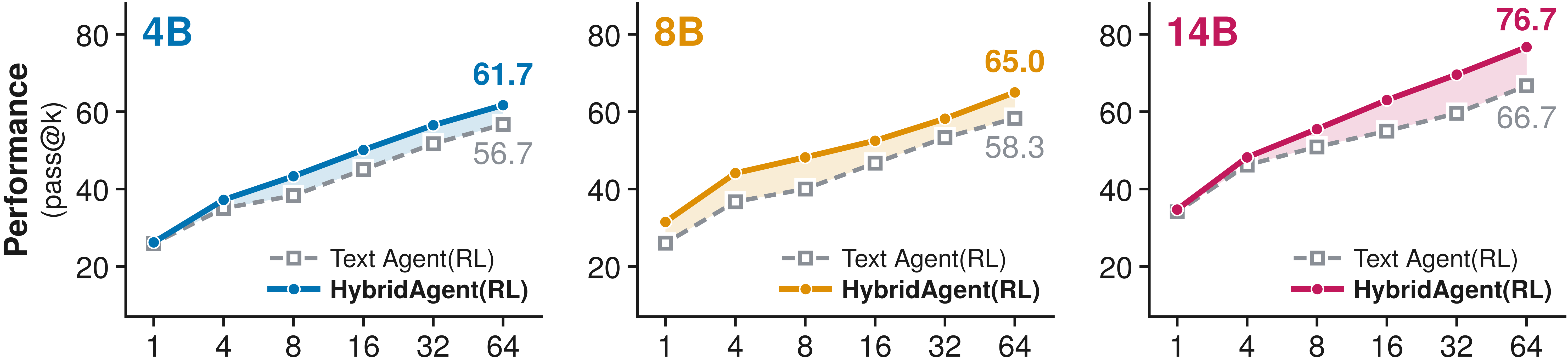}
    \caption{\textbf{Scalability of \ours at Test-time.} We evaluate pass@k ($4,8,16,32,64$) performance on 100 hard math and science questions and compare with text agents trained with GRPO.
    }
    \label{fig:passk}
    \vspace{-8pt}
\end{figure}

\vspace{-4pt}
\subsection{Scaling Performance via Hybrid Reasoning}

\textbf{Main Results.}
Tables \ref{tab:main_agentic} and \ref{tab:main_others} report the overall performance of \ours across 11 benchmarks spanning math, science, medicine, and knowledge search domains. 
On challenging agentic reasoning benchmarks, \ours delivers competitive performance over larger frontier LLM agents and specialized tool-use frameworks, with \ours (14B) improving over the strongest agentic baseline by an average of 17.4\%.
In addition, under matched model sizes and training budgets, HybridSFT and HybridRL demonstrate consistent training effectiveness and improve over corresponding text/latent baselines by 18.5\% at the SFT stage and 10.4\% at the RL stage.
Together, \ours combines the strengths of text-based agentic tool use with faster and more expressive latent reasoning, allowing the agent to interact externally while achieving deeper thinking depth internally.

\begin{figure}[!t]
    \centering
    \includegraphics[width=\linewidth]{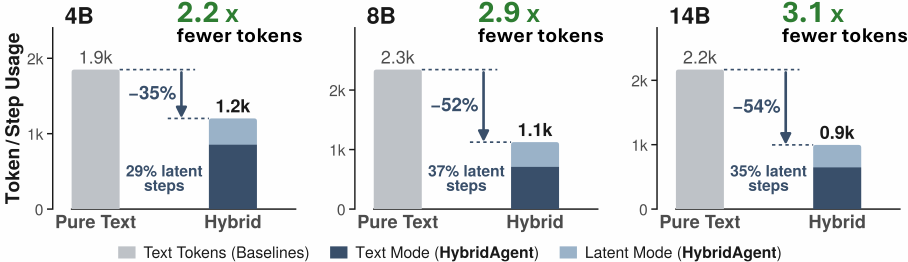}
    \vspace{-12pt}
    \caption{
    \textbf{Efficient Hybrid Generation of \ours across Model Scales.} \ours reduces both text-token usage and total generation steps compared with matched text-based LLM agents.
    }
    \label{fig:token_eff}
    \vspace{-8pt}
\end{figure}

\textbf{Scaling Hybrid Reasoning at Test Time.}
To further examine the test-time scalability of our hybrid reasoning paradigm, we increase inference compute budgets through additional sampling and evaluate pass@k performance.
Figure \ref{fig:passk} shows that \ours exhibits a clear upward trend as sampling increases across all model scales, while benefiting more from increased test-time compute than agents with text-only RL. Additional 4B-scale experiments and case studies are in Appendix~\ref{app:additional_exps} and \ref{app:case_probe}.

\vspace{-6pt}
\subsection{Efficiency Advantage on Hybrid Generation}
\vspace{-3pt}

\begin{wrapfigure}{r}{0.4\linewidth}
\vspace{-10pt}
    \centering
    \includegraphics[width=\linewidth]{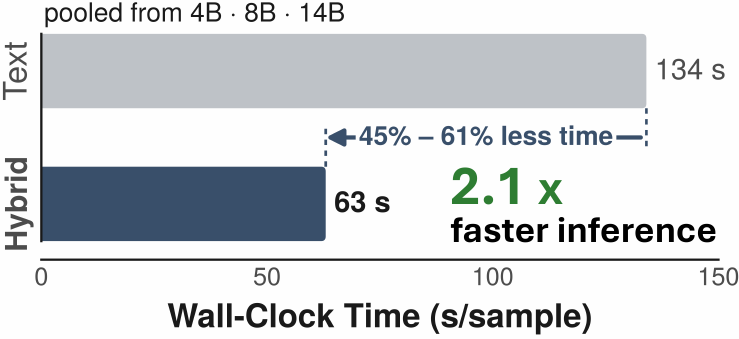}
    \caption{Inference time speedup of \ours across all model scales.}
    \label{fig:time_eff}
\vspace{-5pt}
\end{wrapfigure}
Beyond performance effectiveness, a key architectural advantage of \ours is that latent mode enables efficient generation without costly explicit token decoding.
We evaluate this property empirically along three complementary dimensions: (i) \textit{token throughput}, (ii) \textit{wall-clock inference time}, and (iii) \textit{actual computation cost}.
As shown in Figures \ref{fig:token_eff} and \ref{fig:time_eff}, \ours adaptively balances its hybrid reasoning across text and latent modes by using $2.7\times$ fewer text tokens, reducing total generation steps by up to 54\%, and achieving $2.1\times$ faster inference than text-based baselines.
Together with the cost analyses in Table \ref{tab:inference_cost} (see Appendix \ref{app:cost_analyses}), \ours generates more concise and refined reasoning trajectories, reducing inference overhead and cost.

\vspace{-10pt}
\section{In-depth Analyses}
\label{app:in_depth_analyses}

\textbf{RoadMap.}
We further analyze \ours from 3 complementary perspectives to assess the method design and better understand its hybrid reasoning behavior. 
\textbf{(i) Training Dynamics}: We study how self-conditioning in HybridSFT improves reasoning performance (Table \ref{tab:hybrid_SFT_compare}) and latent interpretability (Figure \ref{fig:HybridSFT_probe_all}), and how HybridRL shifts more reasoning into latent mode while strengthening tool interaction (Figure \ref{fig:training_dynamics}).
\textbf{(ii) Latent-Mode}: We intervene on latent content, ordering, and reasoning depth to examine \ours's causal reliance on latent thoughts (Table \ref{tab:causal_interventions}), and ablate \link architectures to assess the benefits of residual alignment (Table \ref{tab:hybridlink_design}).
\textbf{(iii) Text-Mode}: We evaluate text-based agentic tool use (Figure \ref{fig:tool_use_analyses}) and auditability of \ours (Table \ref{tab:monitorability}) to verify that text-mode supports reliable environment interaction and remains informative for external monitoring/debugging.

\vspace{-8pt}
\subsection{Training Dynamics on HybridSFT and HybridRL}
\label{app:ana_training}
\vspace{-10pt}

\begin{figure}[!ht]
\centering
\refstepcounter{table}\label{tab:hybrid_SFT_compare}
\begin{minipage}[c]{0.56\linewidth}
\centering
\small
\setlength{\tabcolsep}{3.5pt}
\resizebox{\linewidth}{!}{%
\begin{tabular}{lccc}
\toprule
\textbf{Method} & \textbf{AIME25} & \textbf{AIME26} & \textbf{MATH500} \\
\midrule
Coconut & 16.3 & 15.2 & 61.2 \\
Standard Text SFT & 30.8 & \textbf{31.7} & 83.8 \\
\midrule
Qwen3-14B-Base & 4.9 & 3.0 & 47.0 \\
+ Format Activation & 20.6 & 19.8 & 62.4 \\
+ \link Activation & 23.8 & 22.5 & 73.8 \\
\textbf{+ Self-Condition (HybridSFT)} & \textbf{32.5} & \textbf{31.7} & \textbf{84.2} \\
\bottomrule
\end{tabular}
}
\end{minipage}\hfill
\begin{minipage}[c]{0.42\linewidth}
\centering
\includegraphics[width=\linewidth]{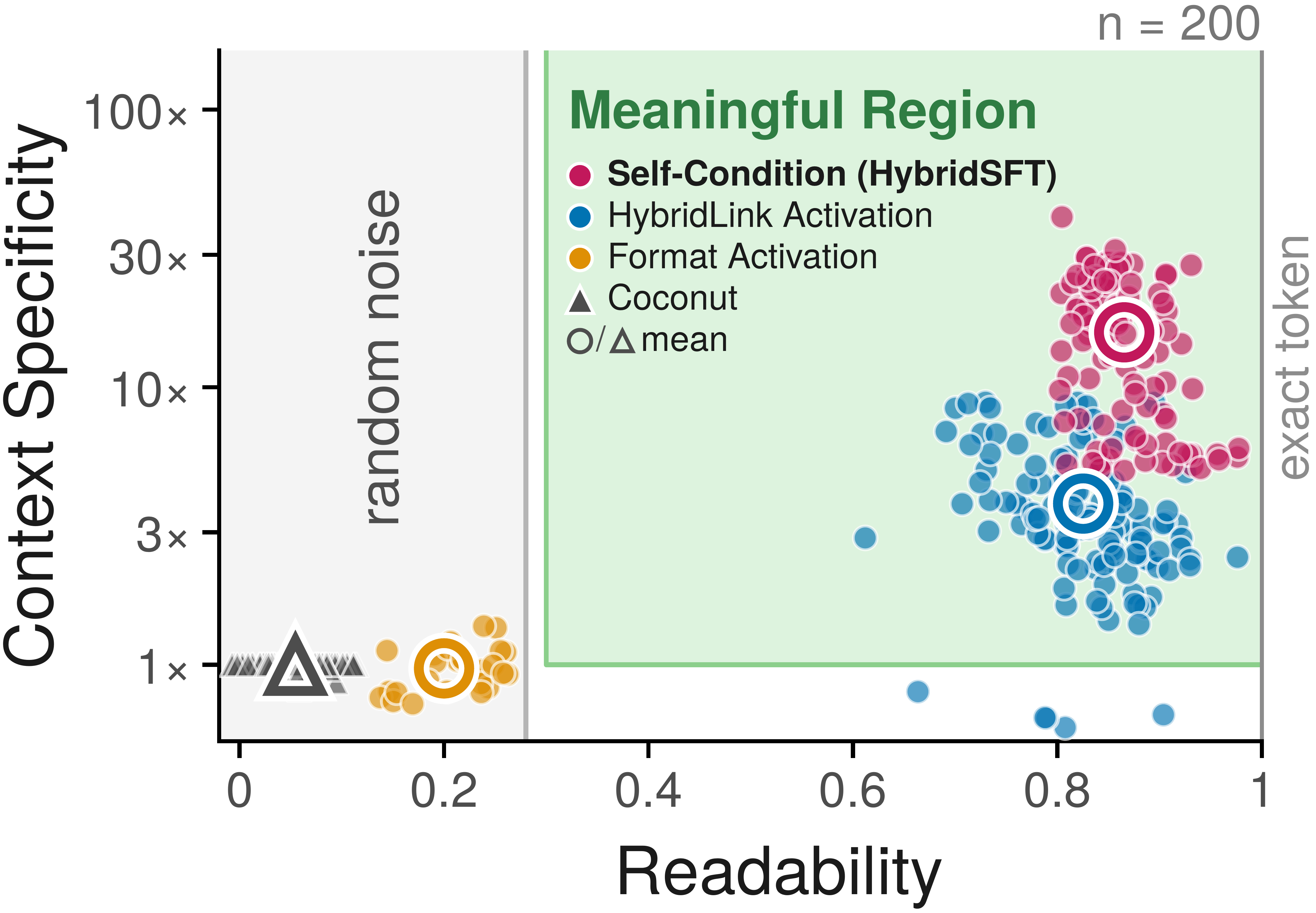}
\end{minipage}
\renewcommand{\figurename}{Table~\thetable\ and Figure}
\caption{
\textbf{Effectiveness and Interpretability of Self-Conditioned HybridSFT.}
\textit{Left:} We compare successive HybridSFT stages with latent and text training baselines using the same Qwen3-14B-Base backbone.
\textit{Right:} We evaluate the readability and context-specificity across 200 downstream questions, showing how self-conditioning improves latent transparency.
}
\label{fig:HybridSFT_probe_all}
\end{figure}

\begingroup
\setlength{\emergencystretch}{1em}
\textbf{Self-Conditioning Enables Effective and Interpretable HybridSFT.}
To understand why self-conditioned training is necessary for \ours during SFT, we compare HybridSFT against variants with only format and \link activations as well as latent SFT baselines like Coconut, from both performance and interpretability perspectives.
As shown in Table~\ref{tab:hybrid_SFT_compare}, although all SFT variants improve the original shared Qwen3-14B-Base backbone, activation alone remains insufficient. Self-conditioned supervision further improves performance by an average of 30.5\% over \link activation, bringing HybridSFT to parity with standard text SFT across all three math benchmarks.
\par\endgroup

We further probe the semantics of the latent thoughts produced by each method over 200 questions spanning all downstream domains.
Figure~\ref{fig:HybridSFT_probe_all} reveals a clear progression, where \link activation first makes latent contents more interpretable, and the additional self-conditioned training sharpens the latent readability and semantic informativeness.
We then take a closer look to examine individual test instances through detailed case studies in Appendix \ref{app:case_probe}. 
Specifically, methods with low faithfulness and specificity generally produce largely unreadable artifacts of the embedding geometry, whereas the complete HybridSFT yields clear and semantically coherent latent readouts.
Overall, these analyses reveal that self-conditioned training is essential to HybridSFT for making the latent reasoning traces in \ours fully transparent and directly interpretable.

\begin{figure}[!h]
    \centering
    \includegraphics[width=\linewidth]{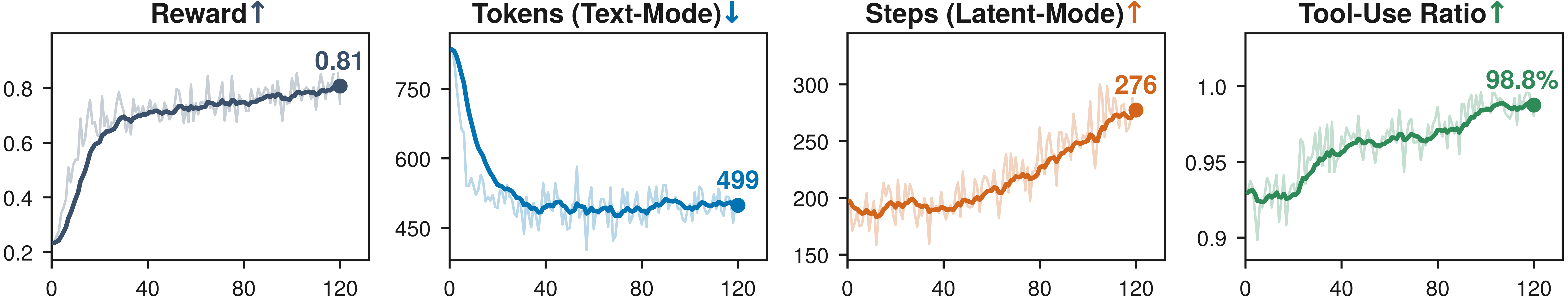}
    \caption{\textbf{Training Dynamics of HybridRL.} As training progresses, \ours shifts more generation from text to latent mode while increasing reward and tool use. These trends reflect how \ours adaptively learns to allocate its reasoning and tool calling efforts across the two modes.
    }
    \label{fig:training_dynamics}
\end{figure}

\textbf{Training Dynamics of HybridRL.} We analyze \ours{}'s training behaviors to examine how it dynamically shifts and allocates reasoning effort across text/latent modes under HybridRL.
In Figure \ref{fig:training_dynamics}, as the training step increases, our 14B agent progressively allocates more generation to latent mode while simultaneously strengthening its tool-invocation capability.
Through training, \ours adaptively learns to better utilize its two modes, shifting more reasoning into faster latent thinking while selectively exposing only key steps for external auditing and tool use through the text interface.

\begin{wraptable}{r}{0.43\textwidth}
\centering
\vspace{-10pt}
\caption{Ablation on HybridRL rewards.}
\vspace{-5pt}
\label{tab:reward_hybridRL}
\small
\resizebox{\linewidth}{!}{%
\begin{tabular}{@{}lccc@{}}
\toprule
\textbf{Reward Design} & MATH500 & GAIA & WebWalker \\
\midrule
HybridRL (8B) & \textbf{84.6} & \textbf{43.7} & \textbf{42.5} \\
\midrule
\quad w/o $r_{\text{correct}}$ & 79.2 & 35.9 & 37.5 \\
\midrule
\quad w/o $r_{\text{format}}$ & 80.6 & 40.7 & 40.5 \\
\midrule
\quad w/o $r_{\text{tool}}$ & 84.2 & 38.8 & 41.0 \\
\bottomrule
\end{tabular}%
}
\vspace{-8pt}
\end{wraptable}
\textbf{Reward Shaping in HybridRL.}
We ablate each reward component in \eqref{eq:rl_reward} using \ours (8B), removing one term at a time under otherwise identical setups.
In Table~\ref{tab:reward_hybridRL}, all three components provide complementary supervision. Excluding $r_{\text{correct}}$ causes the largest overall degradation, reducing performance by 6.4\%-17.8\%, while removing $r_{\text{format}}$ consistently lowers performance by 4.7\%-6.9\%.
The benefit of $r_{\text{tool}}$ is most pronounced on agentic benchmarks such as GAIA and WebWalker. Since the reward is granted only to trajectories that invoke both Python and search tools, removing $r_{\text{tool}}$ has a minor effect on MATH500, where Python alone is typically sufficient, but reduces GAIA and WebWalker by 11.2\% and 3.5\%.
Overall, each reward component provides complementary supervision for answer correctness, format structure, and effective multi-tool use.

\subsection{Causal Reliance and \link Design in Latent Mode}
\label{app:ana_latent}

\newlength{\interventiontablewidth}
\begin{table}[!h]
\centering
\caption{\textbf{Causal Interventions on \ours Latent Mode.}
We report MATH500 accuracy (\%) for 8B and 14B scales under different interventions on latent-mode thoughts generated by \ours.
Performance drops substantially when latent semantics or structure are disrupted, indicating that \ours causally relies on semantically meaningful latent-mode thoughts.
Here, $v$ denotes the original latent vector and $v'$ denotes the intervened
counterpart. The relative perturbation strength $d$ is computed as
$\|v'-v\|/\|v\|$, averaged over latent steps, where $d=0$ means no intervention.
}
\label{tab:causal_interventions}
\small
\renewcommand{\arraystretch}{1.16}
\setlength{\tabcolsep}{4pt}
\setlength{\interventiontablewidth}{%
  \dimexpr\textwidth-10\tabcolsep\relax
}

\setlength{\aboverulesep}{0pt}
\setlength{\belowrulesep}{0pt}
\arrayrulecolor{black}
\color{auditInk}
\newcommand{\interventioncategory}[3]{%
  \rowcolor{#1}%
  \multicolumn{5}{l}{\rule{0pt}{2.3ex}\textbf{\textcolor{#2}{#3}}} \\
}

\begin{tabular}{
  >{\raggedright\arraybackslash}m{0.21\interventiontablewidth}
  >{\raggedright\arraybackslash}m{0.45\interventiontablewidth}
  >{\centering\arraybackslash}m{0.12\interventiontablewidth}
  >{\centering\arraybackslash}m{0.10\interventiontablewidth}
  >{\centering\arraybackslash}m{0.12\interventiontablewidth}
}
\specialrule{0.8pt}{0pt}{0pt}
\textbf{Latent-Mode Intervention}
& \textbf{Setup Description}
& \textbf{Strength $d$}
& \textbf{8B-Scale}
& \textbf{14B-Scale} \\
\specialrule{0.5pt}{0pt}{0pt}
\interventioncategory{auditMonitorFill}{auditMonitorText}{Noise Perturbation}
\rowcolor{auditMonitorFill}
Gaussian Noise ($\sigma=0.5$)
& Add perturbation with random noise of magnitude $0.5\|v\|$ before latent feedback.
& 0.5 & 72.6 & 75.0 \\
\specialrule{0.25pt}{0pt}{0pt}
\rowcolor{auditMonitorFill}
Gaussian Noise ($\sigma=2.0$)
& Add perturbation with random noise of magnitude $2.0\|v\|$ before latent feedback.
& 2.0 & 66.4 & 70.2 \\
\specialrule{0.5pt}{0pt}{0pt}
\interventioncategory{auditDebugFill}{auditDebugText}{Latent Replacement}
\rowcolor{auditDebugFill}
Random-token Replacement
& Replace $v$ with a uniformly sampled token embedding, rescaled to $\|v\|$.
& 1.6 & 41.2 & 44.6 \\
\specialrule{0.25pt}{0pt}{0pt}
\rowcolor{auditDebugFill}
Cross-problem Replacement
& Replace each latent span with the corresponding latents from a different partner problem.
& 1.4 & 44.8 & 48.6 \\
\specialrule{0.25pt}{0pt}{0pt}
\rowcolor{auditDebugFill}
Mean-latent Replacement
& Replace every $v$ with the mean latent vector from the original \ours trajectories.
& 1.0 & 53.6 & 58.4 \\
\specialrule{0.25pt}{0pt}{0pt}
\rowcolor{auditDebugFill}
Nearest-token Replacement
& Replace $v$ with its nearest token embedding, rescaled to $\|v\|$.
& 0.2 & 82.6 & 87.2 \\
\specialrule{0.5pt}{0pt}{0pt}
\interventioncategory{auditSufficiencyFill}{auditSufficiencyText}{Latent Permutation}
\rowcolor{auditSufficiencyFill}
Within-block Permutation
& Randomly permute latent vectors within each block; Preserve content but disrupt order.
& 0.8 & 73.2 & 76.4 \\
\specialrule{0.5pt}{0pt}{0pt}
\interventioncategory{auditCorrectionFill}{auditCorrectionText}{Latent Computation Restriction}
\rowcolor{auditCorrectionFill}
One-step Only for Latent Span
& Force each latent span to terminate after one generation step.
& -- & 72.2 & 75.0 \\
\specialrule{0.25pt}{0pt}{0pt}
\rowcolor{auditCorrectionFill}
Block Latent-Mode Entry
& Prevent entry into latent mode, forcing the agent to reason entirely in text.
& -- & 80.4 & 78.8 \\
\specialrule{0.5pt}{0pt}{0pt}
\rowcolor{auditTraceFill}
\textbf{\ours (ours)}
& No intervention; the agent feeds back its own generated latent vector $v$ at every latent step.
& \textbf{0.0} & \textbf{84.6} & \textbf{88.2} \\
\specialrule{0.8pt}{0pt}{0pt}
\end{tabular}
\arrayrulecolor{black}
\end{table}

\textbf{Does \ours Causally Rely on its Latent Mode Thoughts?}
To examine whether latent thoughts are functionally used during hybrid reasoning, we intervene on the latent vectors generated by 8B and 14B \ours on the MATH500 downstream task. We perturb their content, ordering, and reasoning duration through 4 different types of controlled interventions, including \textcolor{auditMonitorText}{\textbf{(i) noise perturbation}}, \textcolor{auditDebugText}{\textbf{(ii) latent replacement}}, \textcolor{auditSufficiencyText}{\textbf{(iii) latent permutation}}, and \textcolor{auditCorrectionText}{\textbf{(iv) latent computation restriction}}. Table~\ref{tab:causal_interventions} details the intervention setups and reports performance under each scenario.

As shown in the table, disrupting latent thoughts consistently degrades performance across both scales. Specifically, cross-problem and random-token replacement reduce accuracy by 44.9-47.0\% and 49.4-51.3\%, while disrupting their ordering or restricting each latent span to one step leads to 13.4-15.0\% degradation. Notably, cross-problem replacement causes substantially greater degradation than even stronger Gaussian perturbations, suggesting that performance depends on the semantic content of latent thoughts rather than perturbation magnitude alone.
These analysis results on causal interventions indicate that \ours functionally relies on the semantics, ordering, and sufficient reasoning depth of its latent thoughts during hybrid reasoning.

\textbf{\link Design.} We next validate the effectiveness of \link and its residual connection by comparing our adopted 2-layer residual design against 4 alternative architectures: (i) direct hidden-state reuse (i.e., $e_{t+1}$$=$$h_t$) without learned alignment, (ii) 1-layer projection, (iii) 1-layer projection with residual connection, and (iv) 2-layer projection without residual connection. We end-to-end train each \link variant with the 14B backbone under the same setup and evaluate across tasks.

\begin{wraptable}{r}{0.5\textwidth}
\centering
\vspace{-10pt}
\caption{\textbf{Effectiveness of \link.}
We compare accuracy across alternative architectural designs.}
\label{tab:hybridlink_design}
\small
\resizebox{\linewidth}{!}{%
\begin{tabular}{lccc}
\toprule
\textbf{\link Design} & MATH500 & MedQA & MuSiQue \\
\midrule
Identity (No Alignment) & 71.4 & 73.7 & 24.5 \\
\midrule
1-Layer & 83.6 & 79.7 & 28.5 \\
\midrule
1-Layer+Res. & 86.4 & 80.3 & 31.0 \\
\midrule
2-Layer & 85.8 & 81.0 & 31.5 \\
\midrule
\textbf{2-Layer+Res. (ours)}
& \textbf{88.2} & \textbf{81.7} & \textbf{34.5} \\
\bottomrule
\end{tabular}%
}
\end{wraptable}
As shown in Table~\ref{tab:hybridlink_design}, the full two-layer residual design of \link yields consistent gains across all three downstream benchmarks, with the residual branch benefiting both one- and two-layer projection networks.
Specifically, incorporating the residual connection contributes an average improvement of 4.3\% across different layer depths.
This observed trend aligns with our design motivation in Section~\ref{sec:mode_tran}, where the residual connection preserves the semantics of input last-layer hidden states while the learned projection aligns these representations with the input embedding space.
Together, these two components make \link a more effective and stable interface for hybrid training and generation.

\subsection{Tool Use and Reasoning Auditability in Text Mode}
\label{app:ana_text}

\begin{figure}[!t]
\centering
\includegraphics[width=\linewidth]{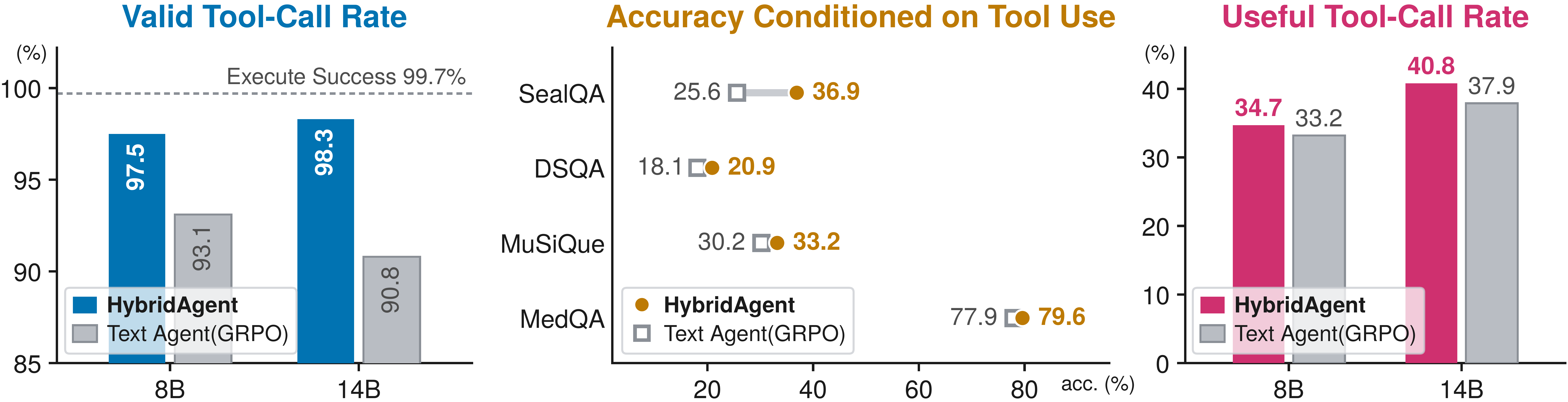}
\caption{\textbf{Agentic Tool-Use Analysis on \ours.}
\textit{Left}: Valid tool-call rate across 8B and 14B scales; "Execute Success" measures error-free execution among valid tool calls.
\textit{Middle}: Final-answer accuracy on questions where the agent performs at least one tool call.
\textit{Right}: Useful tool-call rate, measuring the fraction of tool calls whose execution results contain the final answer.
}
\label{fig:tool_use_analyses}
\end{figure}

\textbf{Does \ours Text-Mode Preserve Agentic Tool Use?}
In Figure~\ref{fig:tool_use_analyses}, we analyze agentic tool-calling capabilities by comparing \ours with normal LLM agents trained on GRPO, examining whether hybrid reasoning preserves reliable external interactions through its text mode.
Across 8B and 14B scales, \ours achieves higher valid tool-call rates while matching the 99.7\% execution success of valid calls, showing that hybrid reasoning preserves a reliable text-mode tool interface. More importantly, the gains extend beyond call validity, as \ours improves final-answer accuracy by 2.2\%-44.1\% on questions where tools are invoked. The 4.5\%-7.7\% higher useful tool-call rate further indicates that \ours not only invokes tools correctly, but also retrieves evidence that more directly contributes to solving the task.
These findings suggest that \ours preserves the reliable and effective agentic tool use of standard LLM agents through its text mode, while exposing fewer text tokens with interleaved latent reasoning.

\textbf{Does \ours Text-Mode Preserve External Auditability?}
A natural question of hybrid reasoning is whether shifting computation into latent mode might potentially compromise the external auditability provided by text. To examine this, we evaluate \ours against matched text agents on MATH500 and AIME25/26, covering 560 problems under identical setups. For each question, an external observer sees only the exposed text steps, tool calls, and tool outputs, while latent spans remain hidden behind their boundary markers. We evaluate four complementary dimensions following commonly adopted evaluation protocols.
\begin{itemize}[leftmargin=*]
    \item \textcolor{auditMonitorText}{\textbf{\textit{Monitoring}}}: Measure whether a blind external monitor can predict rollout correctness from the input problem and visible trace generated by each agent, following \cite{baker2025monitoring}. We report AUROC using GPT-4o-mini \citep{hurst2024gpt} as a weak monitor, which relies primarily on the visible trace rather than solving the problem independently, and GPT-5.6-Luna \citep{openai_gpt56_luna} as a strong monitor with more powerful task-solving capability.
    \item \textcolor{auditDebugText}{\textbf{\textit{Debugging}}}: Following first-error localization~\citep{zheng-etal-2025-processbench}, a judge model (GPT-5.6-Luna) is deployed to identify the first erroneous visible step from a failed rollout and produces a corrective hint, with the diagnosis verified against the reference ground-truth solution.
    \item \textcolor{auditCorrectionText}{\textbf{\textit{Correction}}}: Each failed problem is re-attempted using the corrective hint generated during debugging, and compared against a generic correctness-only feedback baseline \citep{huang2024large} that simply tells the agent its previous answer was incorrect and asks it to retry.
    \item \textcolor{auditSufficiencyText}{\textbf{\textit{Rationale Sufficiency}}}: Following sufficiency evaluation in \cite{deyoung-etal-2020-eraser}, a weak reader (GPT-4o-mini) predicts the agent's own answer from the exposed text steps alone after removing the final answer, measuring whether the visible trace retains sufficient decision-relevant information.
\end{itemize}

\begin{table}[!t]
\centering
\small
\caption{\textbf{Monitorability and Debuggability through \ours Text-Mode.}
We compare \ours with matched same-size text agents to examine whether its exposed text remains sufficient for external monitoring, debugging, correction, and rationale-based inspection under hybrid reasoning. All comparisons are evaluated using paired bootstrap 95\% confidence intervals.}
\label{tab:monitorability}
\renewcommand{\arraystretch}{1.18}
\setlength{\tabcolsep}{6pt}
\setlength{\aboverulesep}{0pt}
\setlength{\belowrulesep}{0pt}
\arrayrulecolor{black}
\color{auditInk}
\resizebox{\linewidth}{!}{%
\begin{tabular}{llcccc}
\specialrule{0.8pt}{0pt}{0pt}
\multirow{2}{*}{\textbf{Metric}}
& \multirow{2}{*}{\textbf{Measurement}}
& \multicolumn{2}{c}{\textbf{8B Scale}}
& \multicolumn{2}{c}{\textbf{14B Scale}} \\
\cmidrule(lr){3-4} \cmidrule(lr){5-6}
&
& \textbf{\ours} & \textbf{Text Agent}
& \textbf{\ours} & \textbf{Text Agent} \\
\specialrule{0.5pt}{0pt}{0pt}
\rowcolor{auditTraceFill}
& Text-Mode tokens per instance
& \textbf{302} & 596 & \textbf{284} & 497 \\
\rowcolor{auditTraceFill}
& Latent-Mode steps per instance
& \textbf{170} & -- & \textbf{183} & -- \\
\rowcolor{auditTraceFill}
\multirow{-3}{*}{\textbf{\textcolor{auditTraceText}{Visible Trace}}}
& Key reasoning steps per 100 visible tokens $\uparrow$
& \textbf{0.72} & 0.57 & \textbf{0.86} & 0.67 \\
\specialrule{0.35pt}{0pt}{0pt}
\rowcolor{auditMonitorFill}
& Weak-monitor AUROC (MATH500) $\uparrow$
& \textbf{0.840} & 0.772 & \textbf{0.860} & 0.818 \\
\rowcolor{auditMonitorFill}
\multirow{-2}{*}{\textbf{\textcolor{auditMonitorText}{Monitoring}}}
& Strong-monitor AUROC (all tasks) $\uparrow$
& \textbf{0.961} & 0.935 & 0.989 & 0.990 \\
\specialrule{0.35pt}{0pt}{0pt}
\rowcolor{auditDebugFill}
& First error correctly localized (\%) $\uparrow$
& \textbf{99.2} & 94.2 & 98.1 & 100.0 \\
\rowcolor{auditDebugFill}
& Error diagnosis matches reference (\%) $\uparrow$
& \textbf{97.6} & 93.5 & \textbf{97.2} & 96.3 \\
\rowcolor{auditDebugFill}
\multirow{-3}{*}{\textbf{\textcolor{auditDebugText}{Debugging}}}
& Useful corrective hint / Answer leakage (\%) $\uparrow/\downarrow$
& 97.6 / 0.8 & \textbf{98.7} / \textbf{0.0} & \textbf{100.0} / 0.9 & 98.2 / 0.9 \\
\specialrule{0.35pt}{0pt}{0pt}
\rowcolor{auditCorrectionFill}
\textbf{\textcolor{auditCorrectionText}{Correction}}
& Fix-rate gain from targeted hint vs.\ generic retry ($\Delta$) $\uparrow$
& \textbf{+9.1} & +3.4
& \textbf{+5.4} & +3.5 \\
\specialrule{0.35pt}{0pt}{0pt}
\rowcolor{auditSufficiencyFill}
\textbf{\textcolor{auditSufficiencyText}{Sufficiency}}
& Answer-agreement gain from visible text only ($\Delta$) $\uparrow$
& \textbf{+0.25} & +0.17 & \textbf{+0.30} & +0.28 \\
\specialrule{0.8pt}{0pt}{0pt}
\end{tabular}%
}
\arrayrulecolor{black}
\end{table}

As shown in Table~\ref{tab:monitorability}, \ours exposes 43\%-49\% fewer text-mode tokens while performing 170-183 reasoning steps in latent mode, yet the visible trace contains 26\%-28\% more key reasoning steps per 100 tokens. This reduction preserves external monitorability, with AUROC remaining within 0.022 of matched text agents across both weak and strong monitors. The exposed text also remains debuggable, localizing the first error in 98\%-99\% of failed rollouts and yielding reference-consistent diagnoses in 97\%-98\% of cases with below 1\% answer leakage. Moreover, text-derived corrective hints produce significant fix-rate gains of 5.4 and 9.1 points at 14B and 8B, while rationale sufficiency remains comparable to or higher than text agents.
As shown by the analyses above, \ours uses its text mode to maintain a compact yet informative textual interface for external monitoring, debugging, and correction.
This provides an important safeguard for agent generation, facilitating human auditing and intervention while preserving efficient internal reasoning.

\section{Related Work}

\textbf{Latent Reasoning Beyond Text.}
Recent works have explored applying intermediate reasoning beyond discrete language tokens into continuous hidden spaces \citep{hao2024training, zhu2025reasoning, yu2026latent}, allowing models to perform internal computation without explicitly verbalizing every reasoning step \citep{geiping2026scaling,shen2026efficient,zhu2025survey}.
Existing approaches broadly realize this idea by inputting last-layer hidden states \citep{saunshi2025reasoning,tack2026llm}, compressing textual chains of thought into latent representations \citep{cheng2024compressed,zhang2026memgen}, or directly optimizing continuous reasoning trajectories \citep{tan2026think,chen2024language}.
These methods improve reasoning efficiency and depth but make direct inspection and external interaction difficult \citep{korbak2025chain,chen2026what}.
\ours builds on this direction by integrating directly inspectable latent reasoning as a native built-in mode within the agentic generation process, while preserving direct text-based auditing and tool interaction.

\textbf{Agentic Reasoning and Tool Use.}
Frontier LLM agents \citep{kimik3,glm5,deepseekv41flash} extend reasoning beyond static generation by interacting with external environments through tools \citep{shen2024llmtoolssurvey,tool_survey}, such as search engines \citep{jin2025search,zheng2025deepresearcher}, code interpreters \citep{feng2026retool,bianchi2026harnessing}, and APIs \citep{qin2024toolllm,patil2025the}.
A growing line of work further improves these capabilities through post-training over multi-turn tool-use trajectories \citep{dong2025tool,qian2025toolrl}, enabling agents to plan, act, observe, and revise their reasoning \citep{su2025toolorchestra,li2025flow}.
Despite this progress, existing agentic frameworks remain text-centric for both model-side reasoning and environment-side interaction, incurring large generation overhead from verbalizing every intermediate step \citep{cuadron2025danger,xu2025chain}.
\ours decouples these roles by interleaving continuous thoughts for internal reasoning and retaining text only where explicit interaction and monitoring are required.

We discuss additional related work and its relation to \ours in Appendix~\ref{app:related_work}.

\section{Discussion and Future Directions}

\textbf{Implications for Frontier Agent Safety and Secure Deployment.}
Frontier labs (e.g., OpenAI, Anthropic) provide API-based access to reasoning models while hiding the chain-of-thought from customers to protect intellectual property and limit information leakage, yet \citet{panfilov2026stealing} show that the encrypted traces returned to clients can be replayed into a sibling model and transcribed verbatim.
\ours places proprietary reasoning on a different footing. Latent thoughts exist only as backbone hidden states that are readable solely through the provider-owned \link (Section~\ref{sec:architecture}), so no text exists to leak. 
The same property aids monitoring. Because the latent trace is held by the provider rather than the client, providers can assess the safety of reasoning and actions over the full readout, including the portion users never see, while the agent benefits from faster inner thinking. Text mode is then reserved for what users legitimately need, such as tool calls, key decisions, and the final answer, preserving an auditable trace without exposing the full reasoning.

\textbf{Future Directions on Hybrid Agent Harness and Efficient Post-Training.}
Future work can build a new type of agent harness around \ours, where latent working memory is a first-class component. Context management can then operate on KV entries directly, workflows can schedule steps by mode, keeping tool calls and human-facing checkpoints in text while routine computation stays latent, and one agent's hybrid trajectory can serve as its message to another. Hybrid generation can also make post-training more efficient. Because latent thoughts remain auditable during rollout, an external critic can assign step-level credit to latent spans, branch promising trajectories, or terminate ones whose latent reasoning has already gone wrong, before any text or tool call is produced.

\section{Conclusion}

We introduce \ours, a unified hybrid reasoning framework that combines efficient latent reasoning with text-based inspection and tool interaction.
\ours enables multi-turn hybrid generation through both architectural designs and dedicated post-training algorithms on SFT and RL stages.
Across 11 benchmarks and 3 model scales, \ours achieves strong performance against advanced LLM agents and agentic frameworks spanning text-based and latent reasoning methods, while substantially reducing inference time, token usage, and computational cost.
Overall, \ours provides a scalable and efficient framework for next-generation agents that can reason beyond human language while retaining agentic interaction in environments and external inspectability.

\section*{Acknowledgments}

The authors thank the members of the Zou Lab and the iDEA-iSAIL Lab for helpful discussions and feedback. Jiaru also thanks Jiuding Sun, Weiyan Shi, Martijn Bartelds, Yongchan Kwon, and many others in the Stanford NLP Group and the Together Agent Team for their support and discussions.

\bibliographystyle{abbrvnat}
\nobibliography*
\begingroup
\hypersetup{urlcolor=black}
\bibliography{main}
\endgroup
\newpage 

\appendix
\renewcommand{\contentsname}{\Large Table of Contents}
{\hypersetup{linkcolor=black}
\tableofcontents
}

\addtocontents{toc}{\protect\setcounter{tocdepth}{2}}

\newpage
\textbf{\Large Appendix}

\section{Multi-turn Continuation over Working Memory}
\label{app:theorey_memory}

Following our notation in Section \ref{sec:pre}, denote the input sequence $E_{\le t}=[e_1,\ldots,e_t]$ and working memory $\mathcal M_t=[K_{\le t},V_{\le t}]$ at step $t$, where $K$ and $V$ collect the keys and values across all layers.

\begin{theorem}[Lossless Multi-Turn Working Memory]
\label{thm:kv}
Given the multi-turn generation process in Section \ref{sec:multi_turn_generate}, consider the fixed \ours $f_\theta$ and \link $\mathcal R$ under consistent inference settings.
At step $t$, let $h_{t+1}$ be the hidden output obtained by processing $e_{t+1}$ with the complete working memory $\mathcal M_t$. Then,
\begin{equation}
\label{eq:kv-equality}
\underbrace{h_{t+1}}_{\text{process }e_{t+1}\text{ using }\mathcal M_t}
=
\underbrace{f_\theta([E_{\le t};e_{t+1}])}_
{\text{apply the complete hybrid prefix}}.
\end{equation}
Thus, working memory preserves the text, latent, and tool observations for continued reasoning, without converting latent thoughts into text.
\end{theorem}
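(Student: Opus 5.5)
The argument is the standard causal-attention argument behind KV caching, with one extra observation: nothing in the Transformer forward pass depends on \emph{how} an input vector $e_s$ was produced. Whether $e_s$ came from $\operatorname{Emb}(\cdot)$ on a sampled text token, from $\mathcal{R}(h_{s-1})$ in latent mode, or from embedding a tool observation, the backbone only ever sees a vector in $\mathbb{R}^{d_h}$ at position $s$. So I will prove, by induction over layers, a statement purely about the input matrix $[E_{\le t};e_{t+1}]$. The hybrid and multi-turn structure then enters only through the claim that $\mathcal{M}_t$ is exactly the collection of keys and values that $f_\theta$ would compute on $E_{\le t}$. Under consistent inference settings (same positional scheme, same precision, deterministic kernels), this is an identity of functions rather than an approximation.

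\textbf{Step 1: Invariant on $\mathcal{M}_t$.} Let $x^{(\ell)}_s$ denote the layer-$\ell$ residual stream at position $s$. I first establish that, for every layer $\ell$ and every $s\le t$, the cached entries satisfy $K^{(\ell)}_s=W_K^{(\ell)}\,\mathrm{LN}(x^{(\ell-1)}_s)$ and $V^{(\ell)}_s=W_V^{(\ell)}\,\mathrm{LN}(x^{(\ell-1)}_s)$, with positional rotation applied at index $s$. Here the $x^{(\ell-1)}_s$ are the activations of a full forward pass of $f_\theta$ on $E_{\le s}$, and by causality they are identical to those of a full pass on $E_{\le t}$. The proof is by induction on $t$, following the construction in \eqref{eq:working_memory}:
\begin{itemize}
\item $\mathcal{M}^{(0)}$ is produced by a standard prefill on $x$, so the invariant holds there.
\item Each later entry is appended by processing its own input vector against the previous memory, which is exactly the one-step claim below applied at a smaller $t$.
\item Tool-response tokens are embedded and prefilled in the same way, and turn boundaries merely continue the position counter, so they introduce no new case.
\end{itemize}
So the invariant and the theorem are proved jointly, by strong induction on $t$.

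\textbf{Step 2: One-step equality.} Fix $t$ and induct on the layer $\ell$. At $\ell=0$, both computations use the same input $e_{t+1}$ at position $t+1$. For the inductive step, suppose position $t+1$ has the same residual state $x^{(\ell-1)}_{t+1}$ under both computations. Then its query $q^{(\ell)}_{t+1}$ coincides. Under the causal mask, the attention output at $t+1$ in the full pass depends only on $\{K^{(\ell)}_s,V^{(\ell)}_s\}_{s\le t+1}$. By Step 1 the entries with $s\le t$ equal the cached ones, and the entry with $s=t+1$ is computed from the same $x^{(\ell-1)}_{t+1}$. Softmax attention therefore yields identical outputs. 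The MLP, the residual additions and the normalization are position-wise, so $x^{(\ell)}_{t+1}$ coincides, and after the final layer and final norm we get $h_{t+1}=f_\theta([E_{\le t};e_{t+1}])$. Feeding $e_{t+2}=\mathcal{R}(h_{t+1})$ or an embedded token then preserves the invariant at $t+1$, which closes the joint induction. Multi-turn resumption is just repeated application of this step; no re-tokenization happens anywhere, which gives the final sentence of the theorem.

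\textbf{Main obstacle.} The delicate point is Step 1 across turn boundaries and latent spans. I need that causality makes earlier keys and values independent of later inputs, and that the positional indices stored in $\mathcal{M}$ match those the full-prefix pass would assign, with no re-indexing or truncation of the cache between turns. I would make "consistent inference settings" precise as exactly these assumptions (same position assignment, no cache eviction, deterministic arithmetic). I would also note explicitly that latent positions are never decoded and re-embedded, so no lossy $\operatorname{argmax}$ projection ever intervenes.
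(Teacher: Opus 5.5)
Your proposal is correct and follows essentially the same route as the paper's proof: an induction over input positions with the invariant that the stored working memory equals the keys and values of a full-prefix forward pass, closed by a layer-by-layer argument using causality, the shared mask and positional encoding, and the position-wise nature of the remaining operations. The paper additionally writes out the concatenation across turn boundaries, and derives that equal hidden states give equal next-token distributions and equal $\mathcal{R}$ outputs, hence identical continuations under shared random draws; your sketch covers these steps only implicitly.
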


\begin{remark}
Theorem \ref{thm:kv} shows that retaining complete working memory preserves the effect of the full hybrid history on subsequent generation across turns.
This justifies resuming \ours after tool interactions without reconstructing earlier turns or converting latent thoughts into text.
\end{remark}

\begin{proof}
Fix $f_\theta$ and \link $\mathcal R$, and use the same deterministic causal operations, masks, and positional encoding, preserving all existing KV entries. Each cached forward run produces both a hidden output and an updated memory:
\[
\underbrace{(e_{t+1},\mathcal M_t)}_{\text{next input and preceding context}}
\xrightarrow{\text{cached forward run}}
\underbrace{(h_{t+1},\mathcal M_{t+1})}_{\text{hidden output and updated memory}}.
\]
The hidden output determines the next text distribution or latent input, while the updated memory provides the context for the following forward run. Here, layer and head indices are omitted for notation simplicity.
We compare a memory-based forward run with a full-prefix forward run under the same model and inference settings.

\textit{Base case ($t=1$).}
Both forward runs start with empty memory and the same input $e_1$. At the first layer, they produce the same queries, keys, values, and attention outputs. The remaining token-wise operations produce the same input to the next layer. Repeating this argument through all layers gives
\[
\mathcal M_1=[K_1,V_1].
\]
Both forward runs also produce the same final hidden output,
$h_1=f_\theta([e_1])$.

\textit{Induction hypothesis.}
Suppose that, after processing step $t$, the stored keys and values are equal to those computed by a full-prefix forward run:
\[
\underbrace{\mathcal M_t}_{\text{stored KV memory}}
=
\underbrace{[K_{\le t},V_{\le t}]}_
{\text{KV computed from the complete prefix }E_{\le t}}.
\]

\textit{Induction step ($t\to t+1$).}
Causality and unchanged positional encoding preserve the earlier keys and values when $e_{t+1}$ is appended. Both forward runs start the first layer with the same input $e_{t+1}$.

If their inputs to the current layer are equal, each head computes the same query, new keys and values. The earlier KV entries are equal by the induction hypothesis. With the same attention mask, each head therefore produces the same attention output. The remaining token-wise operations produce the same input to the following layer.

Repeating this implication through all layers gives the same final hidden output:
\[
h_{t+1}=f_\theta([E_{\le t};e_{t+1}]).
\]
It also gives the same new keys and values, $K_{t+1},V_{t+1}$. Appending them to the stored memory and substituting the induction hypothesis yields
\begin{align*}
\mathcal M_{t+1}
&=\mathcal M_t\oplus[K_{t+1},V_{t+1}]\\
&=[K_{\le t},V_{\le t}]\oplus[K_{t+1},V_{t+1}]\\
&=[K_{\le t+1},V_{\le t+1}].
\end{align*}
Thus, the updated memory satisfies the induction hypothesis at $t+1$, while the hidden-output equality establishes \eqref{eq:kv-equality}. By induction, at every input position, the cached forward run produces the same hidden output and updated KV memory as the full-prefix forward run.

At the end of each turn $j \in \{1,\ldots,n\}$, let $t=|x|+\sum_{i=1}^{j}|\tau^{(i)}|$ count all input slots
processed so far, including the initial context and all text,
latent, and tool-observation inputs. Starting from $\mathcal M^{(0)}=[K_{\le |x|},V_{\le |x|}]$, repeated concatenation gives
\begin{align*}
\mathcal M^{(j)}
&=\mathcal M^{(0)}\oplus[K_{|x|+1:t},V_{|x|+1:t}]\\
&=[K_{\le |x|},V_{\le |x|}]
  \oplus[K_{|x|+1:t},V_{|x|+1:t}]\\
&=[K_{\le t},V_{\le t}].
\end{align*}
Each new entry is appended exactly once, so turn boundaries do not change the memory used by subsequent forward runs.

Finally, the established hidden-state equality gives
\begin{align*}
\operatorname{Softmax}(W_{\mathrm{out}}h_t)
&=\operatorname{Softmax}(W_{\mathrm{out}}f_\theta(E_{\le t})),\\
\mathcal R(h_t)
&=\mathcal R(f_\theta(E_{\le t})).
\end{align*}
where we retain the corresponding hidden states and tool observations, while using
the same random draws for sampling and tool execution in both runs.
Then, both runs process the same next input with the same memory and reasoning mode, producing the same hidden output and updated KV memory. 
Repeating this argument step by step gives identical subsequent hybrid sequences under the shared random draws, and hence identical generation probabilities. 
Therefore, working memory preserves the effect of the complete hybrid history across turns, without reconstructing latent thoughts as text.

\end{proof}

\clearpage

\section{Design and Analyses of Hybrid Supervised Fine-tuning}
\label{app:hybridsft}

We expand on the method description of HybridSFT (Section \ref{sec:HybridSFT}), providing additional design justifications and implementation details for the overall training scheme.

\subsection{Why Self-Conditioned Training?}
\label{app:self_condition}

\begin{figure}[!h]
\centering
\includegraphics[width=\linewidth]{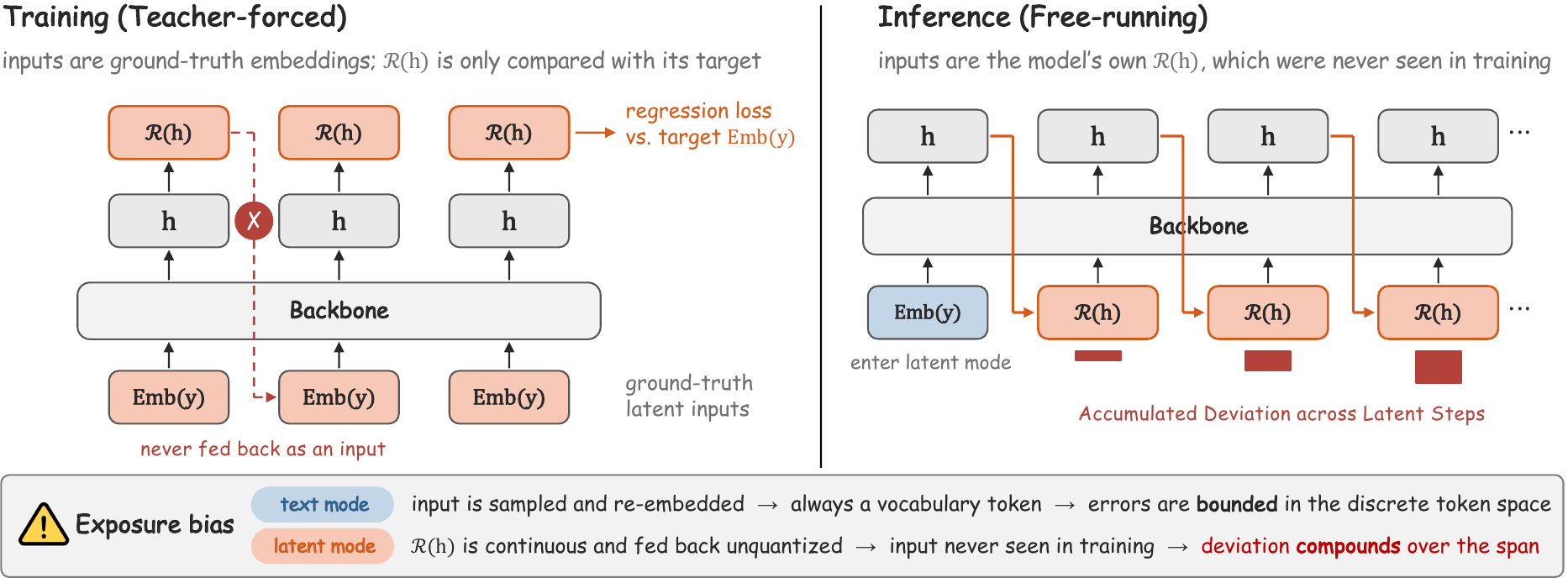}
\caption{\textbf{Train-Inference Mismatch in Latent Reasoning.} We illustrate the exposure bias that arises when teacher-forced training meets free-running inference. \textit{Left:} During activation, the backbone consumes ground-truth embeddings $\mathrm{Emb}(y)$ at latent positions, while \link outputs $\mathcal{R}(h)$ are only trained to match their targets and never fed back as inputs. \textit{Right:} At inference, every latent input is the model's own $\mathcal{R}(h)$, so deviations from the target embeddings accumulate across the span.}
\label{fig:exposure_bias}
\end{figure}

The format and \link activation provide a well-aligned initialization, but both are teacher-forced (Figure~\ref{fig:exposure_bias}, left). At every latent position, the backbone consumes the ground-truth embedding $\mathrm{Emb}(y)$, and \link is trained only to match its output $\mathcal{R}(h)$ to that target; the prediction itself is never fed back as an input. At inference, the agent instead runs free (Figure~\ref{fig:exposure_bias}, right). Once it enters latent mode, every subsequent input is its own $\mathcal{R}(h)$, which neither the backbone nor \link has been conditioned on. This train-inference mismatch is akin to \textit{exposure bias} \citep{ranzato2015sequence,bengio2015scheduled}, a standard issue in autoregressive sequence models that are trained on ground-truth prefixes but consume their own predictions at inference.

It is worth noting that exposure bias is present in both text and latent modes, since both feed the model's own outputs back as inputs.
In text mode (as in standard LLM SFT), each fed-back input is sampled and re-embedded, and thus always corresponds to a token in the discrete vocabulary. A prediction error, where the sampled token differs from the ground-truth target, only changes which token is fed back; the input representation itself never leaves the set seen in training, so errors cannot compound in the input space.
Latent mode, in contrast, has no such projection. $\mathcal{R}(h)$ is continuous and fed back unquantized, so even a small deviation from the target embedding becomes an input the backbone has never seen in training. This deviation propagates to the next hidden state and hence the next latent input, compounding over the tens to hundreds of steps in a latent span and gradually eroding the semantic coherence of the latent trace.

\textit{Self-conditioned HybridSFT} closes this gap by training both components on the model's own generated latent inputs. As in Figure~\ref{fig:hybrid_SFT}, the teacher-forced first pass applies \link to the last-layer hidden state at each latent position, yielding the model's own prediction of the next latent input. The second pass then replaces the ground-truth latent embeddings with these predictions and trains the backbone and \link jointly, so both parts learn under the same latent inputs they encounter at inference. 

Under this training scheme, the backbone learns to read, and \link learns to produce, latent inputs under the same conditions they encounter at inference. Thus, \ours is trained as an integrated whole rather than two separately activated components.

\subsection{Learning When to Enter and Exit Latent Mode}
\label{app:anchors}

As stated in Section~\ref{sec:architecture}, \ours transitions between modes through two special tokens, the entry anchor \texttt{<latent>} and the exit anchor \texttt{</latent>}. Within a latent span, these anchors are the only positions supervised by $\mathcal{L}_{\text{Backbone}}$, while every other latent position is supervised by $\mathcal{L}_{\text{\link}}$ alone. The two transitions are not symmetric, however. Entering latent mode is decided in text mode, where the backbone samples the next token from the vocabulary as usual, whereas exiting must be decided from a continuous latent state through the readout in Eq.~\ref{eq:latent-exit}. HybridSFT therefore treats the entry anchor as an ordinary token and gives the exit anchor dedicated supervision.

\textbf{Entry as a Text-Mode Decision.}
Since entering is a text mode decision, \texttt{<latent>} is simply a cross-entropy target, and the backbone learns from the preceding context when to switch. The anchor position then serves as the first latent step, with its last-layer hidden state passed through \link to produce the next latent input, so the span begins without any additional mechanism.

\textbf{Exit as a Latent-Mode Decision.} Exiting takes place in latent mode, where no token is sampled. By Eq.~\ref{eq:latent-exit}, a span closes when the \link's output $\mathcal{R}(h)$ is nearest to $\mathrm{Emb}(\texttt{</latent>})$, a decision depending on both components, where the backbone produces a hidden state that encodes the exit and \link maps it onto the anchor embedding. HybridSFT therefore supervises the last latent step of each span from both sides.
On the \link side, the regression target at this step is $\mathrm{Emb}(\texttt{</latent>})$, given slightly more weight than the span interior so that the single exit signal is not diluted. On the backbone side, \texttt{</latent>} is also a cross-entropy target predicted from the same hidden state, so the backbone itself learns to signal the exit.

\newpage
\section{Theoretical Analyses of Hybrid Reinforcement Learning}
\label{app:theorey_rl}

\subsection{Formulation of Hybrid Policy Factorization}

Following our notation in Algorithm \ref{alg:hybrid_rl}, denote $\tau=(z, y, o)$ as a hybrid trajectory, where $\mathcal T$ is the set of sampled text positions.
Here, $\pi_\theta(\tau\mid x)$ denotes the likelihood of the agent’s outputs along trajectory \(\tau\), given input $x$.
Meanwhile, tool observations are generated by the environment and serve as context for subsequent model decisions. We formalize the factorization of the hybrid policy below.

\begin{proposition}[Restatement of Hybrid Policy Factorization]
\label{prop:factorization_app}
Let $\tau$ be a hybrid trajectory generated by the policy $\pi_\theta$ (induced by $f_\theta$) with fixed $\mathcal{R}$. Then the hybrid policy factorizes as:
\begin{equation}
\label{eq:rl_factorization_app}
    \pi_\theta(\tau \mid x) \;=\; \prod_{t \in \mathcal{T}} \pi_\theta\bigl(y_t \mid x,\, \tau_{<t}\bigr),
\end{equation}
where $\mathcal{T}$ is the set of sampled text positions of trajectory $\tau$.
\end{proposition}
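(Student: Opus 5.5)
We prove the factorization by writing the joint law of the trajectory with the chain rule, ordered by generation position, and then showing that every non-text position contributes a factor that either equals one or does not depend on $\theta$. Let the hybrid trajectory $\tau$ occupy positions $1,\dots,T$. These split into three disjoint sets: the sampled text positions $\mathcal T$, the latent positions $\mathcal L$, and the tool-observation positions $\mathcal O$. The anchor tokens \texttt{<latent>} and \texttt{</latent>} are emitted by sampling in text mode, or by the deterministic readout in \eqref{eq:latent-exit}, and we place each one accordingly. By the chain rule on the generative process of Section~\ref{sec:architecture},
\begin{equation*}
p_\theta(\tau\mid x)=\prod_{t\in\mathcal T}\pi_\theta(y_t\mid x,\tau_{<t})\prod_{t\in\mathcal L}p_\theta(z_t\mid x,\tau_{<t})\prod_{t\in\mathcal O}P_{\mathrm{env}}(o_t\mid x,\tau_{<t}).
\end{equation*}
Here the mode at each step is itself a deterministic function of the prefix: it is determined by the last emitted anchor and by the exit test. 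Consequently no separate factor for mode selection appears.

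The second step handles the latent factors. Given the prefix $\tau_{<t}$, the hidden state is $h_{t-1}=f_\theta(E_{\le t-1})$, and we may invoke Theorem~\ref{thm:kv} to treat the cached and full-prefix computations interchangeably. The latent input is then $z_t=\mathcal R(h_{t-1})$ with $\mathcal R$ fixed. This is a deterministic map of the prefix, so its conditional law is a point mass and the factor equals one on the realized trajectory. The same holds for the exit decision in \eqref{eq:latent-exit}.

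The third step handles the tool factors. By the setup of Section~\ref{sec:pre}, tool responses are produced by the environment and masked from the policy, so $P_{\mathrm{env}}$ does not depend on $\theta$. We then interpret $\pi_\theta(\tau\mid x)$, as the appendix defines it, as the likelihood of the agent's own outputs. In that reading the environment factors are conditioning context rather than part of the policy. In any case they cancel in the importance ratio $\pi_\theta/\pi_{\theta_{\mathrm{old}}}$ and vanish under $\nabla_\theta\log$. This leaves exactly \eqref{eq:rl_factorization_app}.

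The main obstacle is making the latent step rigorous. A degenerate, Dirac-type conditional has no density with respect to Lebesgue measure on $\mathbb R^{d_h}$, so the phrase "factor equals one" must be justified. Our first approach is to define the trajectory's probability only over its random coordinates, namely the sampled tokens and the observations. We treat the latent vectors as deterministic functions $z=g_\theta(x,y_{<\cdot},o_{<\cdot})$ of those coordinates and condition on them as part of $\tau_{<t}$. Under this definition the latent positions never carry a probability factor, and the proposition follows from the chain rule over the random coordinates alone.

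It remains to check that $z$ depends on $\theta$ only through the conditioning context. Exact rollout replay handles this: the recorded $z$ is held fixed, so the text conditionals are evaluated at the same context under both $\pi_\theta$ and $\pi_{\theta_{\mathrm{old}}}$.
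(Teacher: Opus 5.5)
Your proof is correct and is essentially the paper's argument: you apply the chain rule over positions, note that each latent continuation $e_t=\mathcal R(h_{t-1})$ and each deterministic mode transition (including the exit test in \eqref{eq:latent-exit}) has conditional mass one, and exclude tool observations because $\pi_\theta(\tau\mid x)$ is defined as the likelihood of the agent's own outputs, which leaves only the factors over $\mathcal T$. Your handling of the Dirac issue, by assigning probability only to the random coordinates $(y,o)$ with $z$ a deterministic function of them, usefully tightens the paper's one-line $\Pr(\cdot)=1$ step; your closing replay paragraph is not needed, because the proposition is stated at the same $\theta$ that generated $\tau$, and holding the recorded $z$ fixed for $\theta\neq\theta_{\mathrm{old}}$ belongs to Theorem~\ref{thm:rl} rather than to this proof.
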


\begin{proof}
Fix $\theta$ and a realized trajectory $\tau$. At a latent continuation,
we have $\Pr(e_t=\mathcal R(h_{t-1})\mid x,\tau_{<t})=1$ with a fixed \link $\mathcal R$.
Deterministic reasoning mode transitions (without sampling) also have conditional mass $1$. The probability chain rule thus gives
\[
\pi_\theta(\tau\mid x)
=\left[\prod_{t\in\mathcal T}\pi_\theta(y_t\mid x,\tau_{<t})\right]
 \left[\prod_{\substack{t:\,\text{deterministic}\\\text{steps}}}1\right]
=\prod_{t\in\mathcal T}\pi_\theta(y_t\mid x,\tau_{<t}),
\]
where deterministic steps receive the probability of $1$.

\end{proof}

\begin{remark}
Proposition~\ref{prop:factorization_app} shows that the hybrid policy factorizes over sampled text positions, with each text-token probability conditioned on the complete preceding hybrid context, including text, latent thoughts, and tool observations.
Building on this factorization formulation, we show in the next subsection (Theorem \ref{thm:rl}) how exact rollout replay enables HybridRL's policy updates to assign credit to complete recorded hybrid trajectories.
\end{remark}

\subsection{Hybrid Policy Optimization}

Recall that we collect $\{\tau_i=(z_i,y_i,o_i)\}_{i=1}^{G}$, with group size $G\ge2$, using the rollout policy $\pi_{\theta_{\mathrm{old}}}$.
During replay, $\pi_\theta(y_{i,t}\mid x,\tau_{i,<t})$ scores the recorded prefix
with the current model across step $t$; $\pi_\theta(\tau_i\mid x)$ is the product
in~\eqref{eq:rl_factorization_app} under this same evaluation.
Let $r_i=r(\tau_i)$ be the reward, and $A_i$ be the fixed group-normalized advantage.
Let $\mathcal T_i$ denote the sampled text positions of $\tau_i$. Define
\begin{align}
\rho_{i,t}(\theta)
&=\frac{\pi_\theta(y_{i,t}\mid x,\tau_{i,<t})}
{\pi_{\theta_{\mathrm{old}}}(y_{i,t}\mid x,\tau_{i,<t})},
\label{eq:replay-ratio}\\
\mathcal L_{\mathrm{RL}}(\theta)
&=-\frac{1}{\sum_i|\mathcal T_i|}
\sum_i\sum_{t\in\mathcal T_i}
\min\!\left\{\rho_{i,t}(\theta)A_i,
\operatorname{clip}(\rho_{i,t}(\theta))A_i\right\}.
\label{eq:loss}
\end{align}
Here, $\operatorname{clip}(\rho)$ restricts $\rho$ to $[1-\epsilon,1+\epsilon]$, where $0<\epsilon<1$. Thus, ratios sufficiently close to $1$ are unchanged: $\operatorname{clip}(\rho)=\rho$ whenever $|\rho-1|<\epsilon$.
In the following theorem, we connect this objective to our hybrid policy optimization over complete recorded hybrid trajectories.

\vspace{5pt}

\begin{theorem}[Gradient Equivalence in Hybrid Policy Optimization]
\label{thm:rl}
Based on the rollout replay (Section \ref{sec:hybridrl}), record each trajectory's latent inputs and tool observations. 
Under Theorem~\ref{thm:kv} and the replay conditions in Algorithm \ref{alg:hybrid_rl}, the accumulated text-token gradient is equivalent to the advantage-weighted likelihood gradient for the complete hybrid trajectories:
\begin{equation}
\label{eq:credit}
-\nabla_\theta\mathcal L_{\mathrm{RL}}(\theta_{\mathrm{old}})
=\left.
\frac{1}{\sum_i|\mathcal T_i|}\sum_i A_i
\nabla_\theta\log\pi_\theta(\tau_i\mid x)
\right|_{\theta=\theta_{\mathrm{old}}}.
\end{equation}
Therefore, the text-token update assigns advantage-weighted policy credit to optimize the complete recorded hybrid trajectories.
\end{theorem}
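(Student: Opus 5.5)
The computation is short: differentiate the clipped surrogate \eqref{eq:loss} at $\theta=\theta_{\mathrm{old}}$, then identify the resulting sum of text-token log-likelihood gradients with the trajectory log-likelihood gradient via Proposition~\ref{prop:factorization_app}.

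\emph{Step 1 (replay exactness).} First I need the replayed conditionals at $\theta_{\mathrm{old}}$ to coincide with the rollout conditionals. Replay keeps the recorded latent inputs $z_i$ and observations $o_i$ fixed and runs one full-prefix forward pass. Rollout instead computed each $\pi_{\theta_{\mathrm{old}}}(y_{i,t}\mid x,\tau_{i,<t})$ incrementally over working memory. Theorem~\ref{thm:kv} equates the cached hidden state with $f_\theta([E_{\le t};e_{t+1}])$, so at $\theta=\theta_{\mathrm{old}}$ the replayed probabilities are exactly the rollout ones.

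\emph{Step 2 (evaluation at the old policy).} By Step~1, $\rho_{i,t}(\theta_{\mathrm{old}})=1$ for every $i$ and $t\in\mathcal T_i$. Since $0<\epsilon<1$, there is a neighborhood of $\theta_{\mathrm{old}}$ on which $|\rho_{i,t}-1|<\epsilon$ for all of these finitely many terms. On that neighborhood $\operatorname{clip}(\rho)=\rho$, so the $\min$ equals $\rho_{i,t}A_i$ identically. This lets me differentiate the unclipped expression, with no case analysis at the kink.

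\emph{Step 3 (gradient of the ratio).} The advantages $A_i$ and the denominators $\pi_{\theta_{\mathrm{old}}}$ do not depend on $\theta$. Hence
\[
\nabla_\theta\rho_{i,t}(\theta)\big|_{\theta_{\mathrm{old}}}=\nabla_\theta\log\pi_\theta(y_{i,t}\mid x,\tau_{i,<t})\big|_{\theta_{\mathrm{old}}}.
\]
This gives $-\nabla_\theta\mathcal L_{\mathrm{RL}}(\theta_{\mathrm{old}})=\frac{1}{\sum_i|\mathcal T_i|}\sum_i A_i\sum_{t\in\mathcal T_i}\nabla_\theta\log\pi_\theta(y_{i,t}\mid x,\tau_{i,<t})$.

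\emph{Step 4 (factorization).} Take logs in Proposition~\ref{prop:factorization_app}:
\[
\log\pi_\theta(\tau_i\mid x)=\sum_{t\in\mathcal T_i}\log\pi_\theta(y_{i,t}\mid x,\tau_{i,<t}).
\]
Here the conditioning context contains the recorded latents, and $\mathcal R$ is fixed. Differentiating this and substituting into Step~3 yields \eqref{eq:credit}.

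The main obstacle is justifying Step~4 at the level of gradients rather than values. Under free generation, latent inputs $z=\mathcal R(H)$ depend on $\theta$ through the hidden states, so $\nabla_\theta\log\pi_\theta(\tau\mid x)$ could in principle pick up pathwise terms. I would resolve this by reading $\pi_\theta(\tau_i\mid x)$ as the product evaluated with the recorded prefix held fixed, exactly as the setup preceding the theorem defines it under replay. The deterministic latent steps then contribute factor $1$ with zero gradient, and the identity holds by construction. I would state this convention explicitly so it is clear the theorem is a score-function identity over recorded trajectories.
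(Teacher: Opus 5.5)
Your proposal is correct and follows essentially the same route as the paper: exact replay via Theorem~\ref{thm:kv} gives $\rho_{i,t}(\theta_{\mathrm{old}})=1$, clipping is locally inactive, $\nabla_\theta\rho_{i,t}$ reduces to the text-token score at $\theta_{\mathrm{old}}$, and the logarithm of the factorization turns the sum of text-token scores into $\nabla_\theta\log\pi_\theta(\tau_i\mid x)$. You hold the recorded prefix fixed when defining $\pi_\theta(\tau_i\mid x)$, which is exactly the paper's stipulation that $\partial z_{i,t}/\partial\theta=0$ during replay, and so you correctly read the result as a score-function identity over recorded trajectories rather than the gradient of the freely generated likelihood.
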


\vspace{5pt}
\begin{remark}
At $\theta=\theta_{\mathrm{old}}$, HybridRL produces the same policy update as optimizing the advantage-weighted recorded-trajectory objective.
By preserving the context that produced each reward, exact rollout replay enables the backbone to learn from complete recorded hybrid trajectories.
\end{remark}

\begin{proof}
For trajectory $\tau_i$, we omit the sample index on $E$ and $e$ for notation simplicity. At step $t$ of the $i$-th hybrid trajectory $\tau_i$, the subsequent input can be denoted by
\[
e_{t}=
\begin{cases}
\operatorname{Emb}(y_{i,t}),&\text{generated text token},\\
z_{i,t},&\text{recorded latent input},\\
\operatorname{Emb}(o_{i,t}),&\text{tool observation}.
\end{cases}
\]
During replay, $z_{i,t}$ denotes the latent input recorded under $\theta_{\mathrm{old}}$ and is fixed when differentiating with respect to the current parameters $\theta$, leading to $\frac{\partial z_{i,t}}{\partial\theta}=0$.
Here, although the latent inputs are fixed, $h_{t-1}=f_\theta(E_{\le t-1})$ remains parameter-dependent and retains the chain-rule contribution
\(
\left(\frac{\partial h_{t-1}}{\partial\theta}\right)^{\!\top}
\nabla_{h_{t-1}}
\log\pi_\theta(y_{i,t}\mid x,\tau_{i,<t}),
\)
allowing trajectory-level rewards to optimize the backbone's processing of the complete recorded hybrid context, while keeping the latent inputs fixed.

Then, by causality and Theorem~\ref{thm:kv}, full-sequence replay scores each sampled text token at position $t\in\mathcal T_i$ using the hidden state $h_{t-1}=f_\theta(E_{\le t-1})$ computed from the recorded hybrid prefix.
With this prefix, taking the logarithm and differentiating
\eqref{eq:rl_factorization_app} gives
\begin{equation}
\label{eq:score}
\nabla_\theta\log\pi_\theta(\tau_i\mid x)
=\sum_{t\in\mathcal T_i}
\nabla_\theta\log\pi_\theta(y_{i,t}\mid x,\tau_{i,<t}),
\end{equation}
which expresses the complete recorded-trajectory score as a sum of text-token scores. To connect this equality to the implemented loss, we next show that each token-ratio gradient equals its text-token score at the rollout policy $\theta_{\mathrm{old}}$.

The rollout probability in the denominator of~\eqref{eq:replay-ratio} is fixed during differentiation. Therefore,
\begin{align*}
\nabla_\theta\rho_{i,t}(\theta)
&=
\frac{\nabla_\theta\pi_\theta(y_{i,t}\mid x,\tau_{i,<t})}
     {\pi_{\theta_{\mathrm{old}}}(y_{i,t}\mid x,\tau_{i,<t})}\\
&=
\frac{\pi_\theta(y_{i,t}\mid x,\tau_{i,<t})}
     {\pi_{\theta_{\mathrm{old}}}(y_{i,t}\mid x,\tau_{i,<t})}
\frac{\nabla_\theta\pi_\theta(y_{i,t}\mid x,\tau_{i,<t})}
     {\pi_\theta(y_{i,t}\mid x,\tau_{i,<t})}\\
&=
\rho_{i,t}(\theta)\,
\nabla_\theta\log\pi_\theta(y_{i,t}\mid x,\tau_{i,<t}),
\end{align*}
where the last equality uses
$\nabla_\theta\log p=(\nabla_\theta p)/p$.
At $\theta=\theta_{\mathrm{old}}$, exact replay reproduces the rollout probability, giving $\rho_{i,t}(\theta_{\mathrm{old}})=1$.
Therefore, evaluating the derivative yields
\[
\left.\nabla_\theta\rho_{i,t}(\theta)\right|_{\theta=\theta_{\mathrm{old}}}
=
\left.\nabla_\theta\log\pi_\theta(y_{i,t}\mid x,\tau_{i,<t})
\right|_{\theta=\theta_{\mathrm{old}}}.
\]

Since clipping leaves ratios near $1$ unchanged, each summand in~\eqref{eq:loss} equals $\rho_{i,t}(\theta)A_i$ locally, leading to
\begin{align*}
-\left.\nabla_\theta\mathcal L_{\mathrm{RL}}(\theta)
\right|_{\theta=\theta_{\mathrm{old}}}
&=
\left.
\frac{1}{\sum_i|\mathcal T_i|}
\sum_i A_i\sum_{t\in\mathcal T_i}
\nabla_\theta\rho_{i,t}(\theta)
\right|_{\theta=\theta_{\mathrm{old}}}\\
&=
\left.
\frac{1}{\sum_i|\mathcal T_i|}
\sum_i A_i\sum_{t\in\mathcal T_i}
\nabla_\theta\log\pi_\theta(y_{i,t}\mid x,\tau_{i,<t})
\right|_{\theta=\theta_{\mathrm{old}}}\\
&=
\left.
\frac{1}{\sum_i|\mathcal T_i|}
\sum_i A_i
\nabla_\theta\log\pi_\theta(\tau_i\mid x)
\right|_{\theta=\theta_{\mathrm{old}}},
\end{align*}
where the second equality uses the token-ratio derivative at $\theta_{\mathrm{old}}$, and the final equality uses \eqref{eq:score}.

\end{proof}

\newpage
\section{Extended Related Work}
\label{app:related_work}

\textbf{Adaptive Reasoning on Text and Latent Space.}
Beyond fixed-length chains of thought, recent work lets LLMs adapt how long and in which medium they reason \citep{bigverdi2024perception,lv2026beyond, yang2026demystifying, ye2026latentchem}, for example by learning when to stop latent reasoning or deciding whether explicit deliberation is needed \citep{ning2025learning}. 
Several methods interleave latent reasoning with text decoding \citep{yue2025hybrid}, using confidence or entropy signals \citep{xu2026thinkrouter} to shift into continuous representations at uncertain steps and explore alternative reasoning paths \citep{fu2026selar, shi2026swireasoning}.
However, existing methods largely operate within one reasoning pass or switch modes only at single-turn granularity, and their latent states remain largely opaque to inspect directly. Orthogonal to this line, \ours uses latent mode for efficient deeper reasoning, carries latent context across multiple turns through continuous working memory, keeps latent thoughts directly readable, and preserves the agentic tool-use abilities for long-horizon tasks.

\textbf{Hybrid Communication among Agents.}
A recent line of work in multi-agent systems (MAS) parallel to \ours studies \textit{hybrid-protocol communication}, in which each inter-agent message combines two media. A latent channel carries dense intermediate signals such as hidden states or KV entries, while a text channel carries concise commitments that remain inspectable.
Early instances augment each natural-language message with the sender's internal state trajectory \citep{tang-etal-2025-augmenting}, and later work formalizes the dual-channel design by routing elaborate reasoning through latent vectors while keeping answers and decisions in text \citep{mou2026hylat,dai2026token}.
More recent works take the same view from the deployment side, pairing every latent payload with a visible text commitment wherever oversight is required \citep{liu2026tokensunifiedframeworklatent} and evaluating the security of latent transfer alongside visible text \citep{brito2026latentagentsliekvcache}. Complementary to these inter-agent protocols in the MAS setting, \ours decides the medium at the level of each generation step within a single agent, switching between latent reasoning and text-based tool interaction over multi-turn trajectories.

\textbf{KV-Cache Reuse.}
Reusing key--value (KV) entries across generation steps is standard practice for efficient serving \citep{kwon2023efficient,zheng2024sglang}, and recent work extends it across requests, caching shared prefixes and prompt modules \citep{gim2024prompt,ye2024chunkattention,juravsky2024hydragen} and composing independently encoded caches so that repeated context need not be prefilled again \citep{yang2025kvlink,ma2025blockattention,hu2025epic}. These methods treat the KV cache as an optimization over a context that could in principle be re-encoded from text. \ours likewise builds on the backbone's own KV entries, using them as a unified working memory that stores and propagates both latent and text information across multi-turn hybrid generation.

\newpage
\section{Hybrid Data Curation}
\label{app:annotation}

\begin{figure}[!h]
    \centering
    \includegraphics[width=\linewidth]{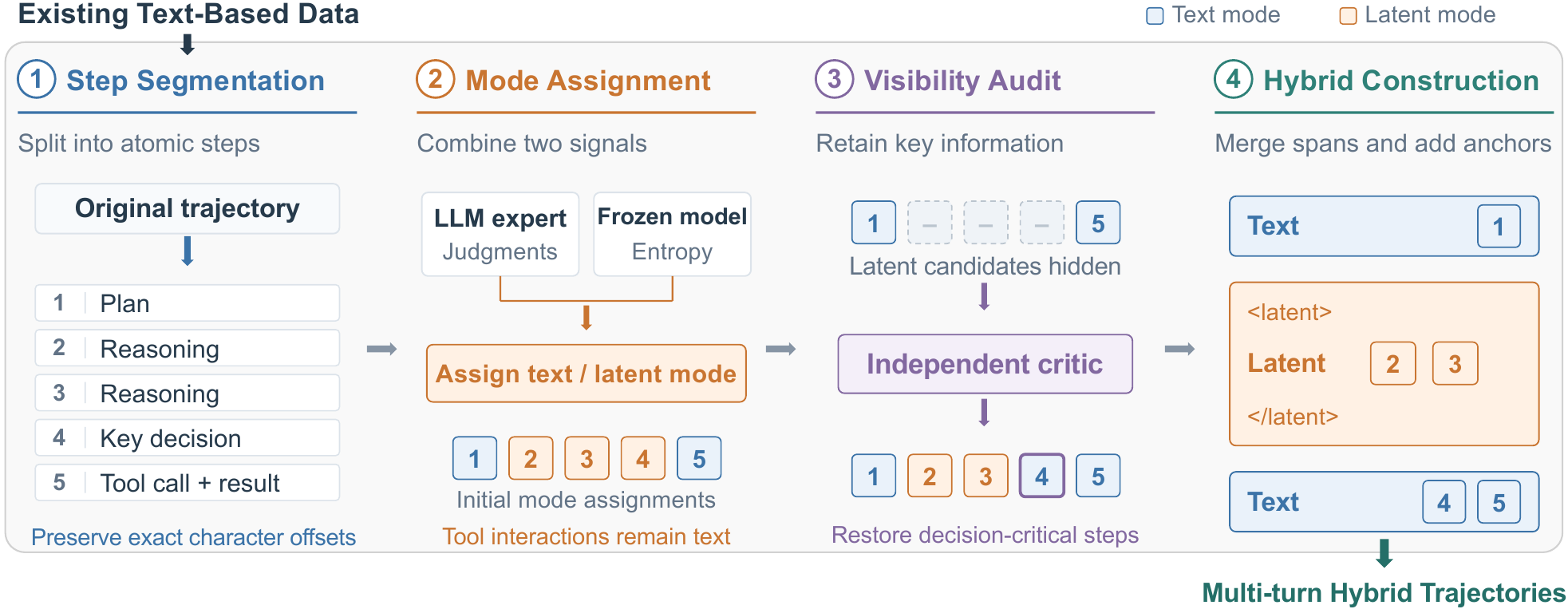}
    \caption{\textbf{Hybrid Data Curation Pipeline.} We convert text-based demonstrations into multi-turn hybrid trajectories using complementary signals from LLM judgments and model entropy to assign reasoning modes, followed by a visibility audit that preserves decision-critical steps in text mode.}
    \label{fig:data_label}
    \vspace{-5pt}
\end{figure}

We curate the hybrid training data by converting existing multi-turn agentic demonstrations \citep{yu2025dapo, dong2025agentic} into 54k hybrid trajectories for text and latent supervision.
Specifically, an automatic annotation pipeline is designed to assign each source CoT step to either text or latent mode, based on complementary signals from external LLM judges and the target model's entropy-based confidence, with no human labeling needed.
Accordingly, high-confidence reasoning steps that the model can handle reliably are assigned as latent targets, while audit-critical reasoning and tool-interaction steps are retained as text targets. We describe the detailed curation pipeline below.

\stepcircle{1} \textbf{Step Segmentation.}
We first split each source trajectory into atomic reasoning steps using a deterministic procedure, recording their exact character offsets. These offsets allow subsequent annotations to identify and group steps while preserving the original content and ordering.

\stepcircle[steporange]{2} \textbf{Mode Assignment.}
We then leverage external LLM experts (GPT-6.1-Sol/GPT-5.6-Luna) to assign each reasoning step to text or latent mode, with tool interactions and key steps needed for external inspection retained in text mode (See Prompts in Appendix \ref{app:prompt_template}). Meanwhile, we complement the external selection with the target model's prediction entropy. We teacher-force the original textual trajectory through a frozen base model and compute the mean token-level entropy for each step $s$:
\begin{equation}
H_s
=
-\frac{1}{|\mathcal{I}_s|}
\sum_{t\in\mathcal{I}_s}
\sum_{v\in\mathcal{V}}
f_{\theta}(v\mid x,\tau_{<t})
\log f_{\theta}(v\mid x,\tau_{<t}),
\label{eq:curation_entropy}
\end{equation}
where $f_\theta(v\mid x,\tau_{<t})$ denotes the frozen base model's next-token probability after the vocabulary projection and softmax. Here, $x$ is the input question, $\tau_{<t}$ contains preceding reasoning and tool observations, $\mathcal{I}_s$ indexes the tokens in step $s$, and $\mathcal{V}$ is the vocabulary. Low-entropy steps become latent candidates, while high-entropy steps remain in text mode. Initial mode assignments combining the two complementary signals are then refined through the visibility audit below.

\stepcircle[steppurple]{3} \textbf{Visibility Audit.}
An independent LLM critic next examines the visible version of each annotated trajectory, with candidate latent spans hidden. The critic assesses whether the remaining text and tool interactions provide sufficient information to follow the agent's decisions. When decision-critical information is missing, the corresponding source steps are restored to text mode. This audit helps preserve the key information needed for external debugging and inspection.

\stepcircle[stepteal]{4} \textbf{Hybrid Trajectory Construction.}
We merge adjacent latent-assigned steps into continuous spans and programmatically insert the special anchors \texttt{<latent>} and \texttt{</latent>}. Tool calls and observations remain in text form. Finally, we verify that removing the inserted anchors recovers the original trajectory byte for byte. This consistency check ensures that the pipeline annotates reasoning modes without rewriting or altering the source demonstrations.

\section{Latent Interpretability Evaluation (Figures \ref{fig:probe_method} and \ref{fig:HybridSFT_probe_all})}
\label{app:latent_probe}

In this section, we describe the evaluation setup and metrics used to measure the readability and context specificity of latent readouts in Figures~\ref{fig:probe_method} and~\ref{fig:HybridSFT_probe_all}.

\textbf{Probe Setup.}
We evaluate \ours across 200 questions spanning downstream tasks, including AIME25/26 \citep{aime25,aime26}, MATH-500 \citep{math500}, MuSiQue \citep{musique}, GAIA \citep{mialon2024gaia}, and DSQA \citep{dsqa}. Figure~\ref{fig:probe_method} compares the final HybridRL checkpoints of \ours at 4B, 8B, and 14B scales. Figure~\ref{fig:HybridSFT_probe_all} compares format activation, \link activation, self-conditioned HybridSFT, and Coconut on the shared Qwen3-14B-Base backbone. Generation uses temperature 0.6, top-p 0.95, top-k 20, and a fixed random seed. Each unit inside the plots is a \emph{latent span}: a maximal sequence of consecutive latent steps. Each plotted point represents one span, and rings mark the means of per-span scores for each model or variant. The annotation $n=200$ refers to total number of questions.

\textbf{Readability.}
For a latent span $b$ containing $L_b$ latent inputs $z_{b,t}$, we measure \textit{readability score} through their average nearest-token alignment, following \eqref{eq:latent-exit}:
\begin{equation}
\mathrm{Read}(b)
=
\frac{1}{L_b}\sum_{t=1}^{L_b}
\max_{v\in\mathcal{V}}
\cos\!\left(z_{b,t},\mathrm{Emb}(v)\right),
\label{eq:readability}
\end{equation}
where $\mathcal{V}$ is the vocabulary and $\mathrm{Emb}(v)$ is the input embedding of token $v$. A score of $1$ indicates that every latent input shares the direction of a vocabulary embedding, corresponding to the ``exact token'' reference. This alignment serves as a geometric proxy for readability. To calibrate the noise reference, we apply the same readout to 2{,}000 Gaussian random vectors and use their mean nearest-token similarity as the threshold $\tau_{\mathrm{noise}}$. Spans above this threshold exhibit stronger average token alignment than the Gaussian reference.

\textbf{Context Specificity.}
Context specificity measures whether a span's pseudo-text is more relevant to its originating question than to other questions from the same task. Let $P_b$ denote the pseudo-text obtained by nearest-token readout of span $b$, as in \eqref{eq:latent-exit}. Let $W(s)$ extract the content words from string $s$: lower-cased alphanumeric tokens longer than two characters, with stop words removed. We use Jaccard overlap,
$J(A,B)=|A\cap B|/|A\cup B|$.
For a task-specific question set $\{q_1,\ldots,q_M\}$ with $M\geq2$, let $i(b)$ identify the question that produced span $b$. We define the \textit{context specificity score} as:
\begin{equation}
\mathrm{Spec}(b)
=
\frac{J\!\left(W(P_b),W(q_{i(b)})\right)}
{\dfrac{1}{M-1}
\displaystyle\sum_{\substack{j=1\\j\neq i(b)}}^{M}
J\!\left(W(P_b),W(q_j)\right)},
\label{eq:specificity}
\end{equation}
when the denominator is positive. A score of $1\times$ indicates equal overlap, while higher scores above $1\times$ indicate greater lexical relevance to the originating question. We exclude readouts without content words and plot positive scores on a logarithmic axis.

\textbf{Meaningful Region.}
The shaded region contains spans whose readability exceeds $\tau_{\mathrm{noise}}$ and whose context specificity exceeds $1$. The two axes are complementary: readability measures alignment with token embeddings, while context specificity measures relevance to the originating question. Together, they provide evidence of readable, question-specific latent readouts that support external inspection.

\section{Additional Experiments}
\label{app:additional_exps}

\subsection{Experiments on 4B-Scale \ours}
\label{app:4b_results}

\begin{table*}[!h]
\centering
\caption{
\textbf{4B-Scale \ours on Math and Knowledge-Intensive Search.} We evaluate performance across both SFT and RL stages. AIME25/26 are reported on avg@16 accuracy.
}
\label{tab:4B_hybrid}
\setlength{\tabcolsep}{4pt}
\resizebox{0.7\textwidth}{!}{%
\begin{tabular}{lcccc}
\toprule
\textbf{Method}
& \textbf{AIME25} & \textbf{AIME26} & \textbf{MATH500}
& \textbf{GPQA-D} \\

\midrule
\noalign{%
  \begingroup
  \color{gray!10}%
  \vskip-\belowrulesep
  \hrule height\belowrulesep
  \endgroup
}
\rowcolor{gray!10}
\multicolumn{5}{l}{%
  \raisebox{0.5pt}[\height][\depth]{\textbf{Qwen3-4B-Base}}%
} \\
Coconut & 10.6 & 13.3 & 40.6 & 17.2 \\
SoftCoT & 14.8 & 18.5 & 55.2 & 26.3 \\
Standard Text SFT & 19.0 & 22.7 & 77.4 & 31.8 \\
\textbf{\ours (SFT)} & 22.5 & 23.3 & 76.8 & 30.8 \\
\midrule
GRPO & 28.1 & 35.6 & 80.6 & 36.9 \\
ARPO & \underline{31.3} & \underline{36.3} & \underline{81.8} & \underline{38.4} \\
\textbf{\ours (RL)} & \gaincell{35.2} & \gaincell{37.5} & \gaincell{82.2} & \gaincell{40.9} \\[-0.4ex]
\bottomrule
\end{tabular}}
\end{table*}

To examine the effectiveness of hybrid reasoning at smaller model scales, we evaluate \ours and corresponding text-based and latent baselines on Qwen3-4B-Base.
As shown in Table~\ref{tab:4B_hybrid}, HybridSFT outperforms both latent baselines across all four benchmarks and improves over standard text SFT on AIME25 and AIME26.
HybridRL further improves performance across all four benchmarks, achieving the best results and exceeding ARPO by an average of 2.0 percentage points.
Together with 8B and 14B comparisons in Table \ref{tab:main_agentic} and \ref{tab:main_others}, these results show that the benefits of hybrid reasoning can extend to different model scales of \ours across downstream reasoning tasks.

\subsection{Cost Analyses on \ours}
\label{app:cost_analyses}

\begin{table}[!h]
\centering
\caption{
\textbf{Cost Analyses on \ours.}
We compare \ours with text agents under the same backbones, reporting compute per question and estimated monetary cost per 1M questions.
Each text or latent generation step is charged as one full backbone forward pass.
Monetary costs assume \$2.50 per H100-hour and 10\% utilization of peak BF16 throughput (approximately \$0.007 per PFLOP).
}
\label{tab:inference_cost}
\small
\resizebox{0.8\textwidth}{!}{%
\begin{tabular}{ccccc}
\toprule
\multirow{2}{*}{\textbf{Model Backbone}} & \textbf{Text Agent} & \textbf{HybridAgent} & \textbf{Cost / 1M questions} & \multirow{2}{*}{\textbf{Cheaper by}} \\
 & (TFLOP) & (TFLOP) & (Text Agent $\rightarrow$ HybridAgent) & \\
\midrule
Qwen3-4B-Base  & 51.2 & 13.9 & \$359 $\rightarrow$ \$98  & 3.7$\times$ \\
Qwen3-8B-Base  & 62.5 & 17.2 & \$439 $\rightarrow$ \$121 & 3.6$\times$ \\
Qwen3-14B-Base & 66.9 & 25.1 & \$470 $\rightarrow$ \$176 & 2.7$\times$ \\
\bottomrule
\end{tabular}
}
\end{table}

\textbf{Cost Analyses on \ours.} To examine whether the efficiency of hybrid generation translates into lower computational cost, we compare \ours{} with matched text agents across 4B, 8B, and 14B backbones. We estimate decoding compute by charging every text or latent step a full backbone forward pass, together with the cost of vocabulary projection or the HybridLink adapter, respectively. Step counts are averaged over matched evaluations with the same questions, generation budgets, and tool-call limits. We further estimate monetary costs assuming \$2.50 per H100-hour and 10\% utilization of peak BF16 throughput. These estimates cover generation compute and exclude prompt prefill, context-dependent attention, and external tool execution.
As shown in Table \ref{tab:inference_cost}, \ours{} consistently reduces estimated decoding compute across all three model scales, from 51.2--66.9 TFLOP to 13.9--25.1 TFLOP per question, corresponding to reductions of 62.5\%--72.9\%. Under the stated pricing and utilization assumptions, the estimated cost of answering one million questions decreases from \$359 to \$98 at 4B, from \$439 to \$121 at 8B, and from \$470 to \$176 at 14B, yielding cost reduction factors of $3.7\times$, $3.6\times$, and $2.7\times$, respectively. Together with the shorter generation trajectories in Figure \ref{fig:token_eff}, these results demonstrate that hybrid reasoning reduces overall decoding compute while explicitly accounting for the backbone computation incurred by latent steps.

\section{Experiment Setups}
\label{app:exp_setups_overall}

\subsection{Benchmark and Evaluation Details}
\label{app:benchmark}

We evaluate \ours on 11 benchmarks spanning mathematical, knowledge-intensive, and agentic reasoning.
Together, these benchmarks assess complementary capabilities in multi-step problem solving, specialized knowledge application, evidence integration, and tool-assisted information seeking. We follow the setup of \citet{dong2025agentic} and describe each benchmark in detail below.

\textbf{(i) Mathematical Reasoning Tasks}

\textit{AIME 25} \citep{aime25}:
AIME 25 comprises 30 problems from the 2025 American Invitational Mathematics Examination, covering algebra, geometry, number theory, and combinatorics.
Its integer-answer problems evaluate competition-level mathematical reasoning, requiring models to combine mathematical insights with precise multi-step calculations.

\textit{AIME 26} \citep{aime26}:
AIME 26 contains 30 problems from the 2026 American Invitational Mathematics Examination, following the same competition format as AIME 25.
It provides a complementary assessment of mathematical problem solving across a separate annual problem set, testing sustained reasoning and accurate numerical derivation.

\textit{MATH500} \citep{math500}:
MATH500 is a 500-problem subset of the MATH test set, spanning 7 mathematical subjects and 5 difficulty levels.
It evaluates math reasoning across diverse problem types, including symbolic manipulation, geometric deduction, and numerical computation.

\textbf{(ii) Knowledge-Intensive Reasoning Tasks}

\textit{GPQA-Diamond} \citep{gpqa}:
GPQA-Diamond is a curated subset of 198 expert-written multiple-choice questions in biology, physics, and chemistry.
Selected through expert and non-expert validation, these questions assess advanced scientific knowledge and reasoning that remain challenging for non-experts even with web access.

\textit{MedQA} \citep{medqa}:
MedQA is a multiple-choice question-answering benchmark derived from professional medical licensing examinations. The dataset contains 300 questions in total.
Its questions assess medical knowledge and clinical reasoning, requiring models to interpret patient information and identify appropriate diagnoses, investigations, or treatments.

\textit{MuSiQue} \citep{musique}:
MuSiQue is a multi-hop question-answering benchmark constructed by composing single-hop questions into connected reasoning chains of two to four hops.
It evaluates the ability to combine evidence across multiple passages, with question construction designed to reduce shortcuts that bypass intermediate reasoning steps. Following \citet{dong2025agentic}, we use 200 test-set questions for downstream evaluation across all methods.

\textbf{(iii) Agentic Reasoning Tasks}

\textit{General AI Assistants (GAIA)} \citep{mialon2024gaia}:
GAIA evaluates general-purpose assistants on real-world questions involving reasoning, web browsing, information processing, and external tool use.
Its three difficulty levels reflect increasingly demanding combinations of reasoning steps and tool interactions, assessing the ability to coordinate these capabilities toward a verifiable answer.

\textit{Humanity's Last Exam (HLE)} \citep{hle}:
HLE comprises expert-authored questions across a broad range of academic disciplines, including mathematics, natural sciences, and the humanities.
Its multiple-choice and short-answer problems assess advanced subject knowledge and reasoning on questions designed to resist quick answers through direct web retrieval.

\textit{WebWalker} \citep{webwalker}:
WebWalkerQA, referred to as WebWalker in our evaluations, is an agentic search benchmark requiring exploration of website subpages and integration of information from one or multiple sources.
Its easy, medium, and hard categories reflect increasing traversal depth, testing the ability to locate and synthesize evidence beyond surface-level search results.

\textit{SealQA} \citep{pham2026sealqa}:
SealQA evaluates search-augmented question answering on factual questions for which retrieved information may be noisy, conflicting, or unhelpful.
We evaluate on Seal-0 and Seal-Hard, which test factual accuracy and reasoning under challenging retrieval conditions.
Seal-0 targets questions selected for near-zero accuracy among the non-reasoning models used during benchmark construction; Seal-Hard includes a broader collection of difficult questions.

\textit{DeepSearchQA (DSQA)} \citep{dsqa}:
DeepSearchQA evaluates multi-step web research through tasks that require assembling complete answer sets from information distributed across sources.
It tests search planning, evidence aggregation, entity disambiguation, and the ability to determine when sufficient information has been collected.
We report both full-correct rate and F1 score to assess complete task success and the balance between answer precision and coverage.

\subsection{Baseline Details}
\label{app:baseline}

To evaluate the effectiveness and efficiency of \ours, we compare with 4 categories of methods, including (i) Text-based Agentic Methods, (ii) Latent Reasoning Methods, (iii) Frontier LLM Agents, and (iv) Agentic Frameworks. We describe the implementation details of each baseline below.

\textbf{(i) Text-based Agentic Methods}

\textit{Standard Text SFT}: We apply standard SFT using the same training data and backbone models as \ours. During evaluation, the models can invoke tools and incorporate the returned observations into subsequent reasoning. We use this baseline mostly to compare with the HybridSFT.

\textit{Agentic RL (GRPO/ARPO)}: We apply commonly adopted agentic RL methods after standard text SFT to further train the text agent baselines. GRPO \citep{shao2024deepseekmath} optimizes the policy using group-normalized rewards from sampled responses without requiring a separate critic. ARPO \citep{dong2025agentic}: ARPO combines entropy-guided adaptive rollouts with step-level advantage attribution to improve multi-turn tool use. We train these methods using the same RL training data and backbone models as \ours, primarily to evaluate the effectiveness of HybridRL.

\textbf{(ii) Latent Reasoning Methods}

\textit{Coconut} \citep{hao2024training}: Coconut performs continuous latent reasoning by directly feeding the model's last hidden state back as the next input embedding. A staged training curriculum progressively replaces textual reasoning steps with latent thoughts, teaching the model to reason in continuous space before generating a textual answer.

\textit{SoftCoT} \citep{xu-etal-2025-softcot}: SoftCoT uses a lightweight frozen assistant model to generate question-dependent soft thought representations, which a trainable projection maps into the target LLM's representation space. These representations guide subsequent generation, enabling latent reasoning while keeping the assistant and target LLM frozen.

\textit{SoftThinking} \citep{zhang2025soft}: SoftThinking is a context-based method that represents intermediate reasoning as probability-weighted mixtures of vocabulary embeddings. Feeding these hidden continuous representations back into the model preserves alternatives from the next-token distribution.

We implement all latent reasoning baselines using the same backbone models and, where applicable, training settings and budgets as \ours to ensure a fair comparison.

\textbf{(iii) Frontier LLM Agents}

To evaluate \ours in a broader context, we compare it with current leading closed-source and open-source LLM agents, including GPT-5.6-Luna \citep{openai_gpt56_luna}, Kimi-K3 \citep{kimik3}, GLM-5.3-Flash \citep{glm5}, DeepSeek-V4.1-Flash \citep{deepseekv41flash}, and QwQ-32B \citep{qwq32b}, to assess both performance and inference efficiency of \ours.

\textbf{(iv) Agentic Frameworks}

We further compare \ours with established agentic frameworks that integrate multi-step reasoning with external tool calling and environmental interactions. We apply Qwen3-14B as the backbone for all these frameworks, with the same tool access and usage budgets as \ours.

\textit{ReAct} \citep{yao2023react}: ReAct interleaves natural-language reasoning with actions and observations from an external environment. This interaction loop allows an agent to gather information, revise its reasoning, and determine subsequent actions based on tool feedback.

\textit{Search-o1} \citep{search-o1}: Search-o1 augments large reasoning models with adaptive search to address knowledge gaps encountered during reasoning. Its Reason-in-Documents module analyzes retrieved documents and extracts relevant evidence before incorporating it into ongoing reasoning.

\textit{WebThinker} \citep{li2025webthinker}: WebThinker equips LLMs with a Deep Web Explorer that supports iterative search, webpage navigation, and information extraction. Its Think-Search-and-Draft strategy interleaves reasoning, evidence collection, and report generation to support complex research tasks.

Overall, these baseline methods provide complementary comparisons across training stages, text-based and latent-space reasoning, and tool-integrated agentic settings.

\subsection{Implementation Details}
\label{app:imple_setup}

\textbf{Models and Training Details.} We instantiate \ours with Qwen3-Base backbones \citep{yang2025qwen3} at three scales of 4B, 8B, and 14B.
For \link, we use the same two-layer design across all three backbones. The HybridSFT training data is constructed by converting 54K text-only sources \citep{dong2025tool,dong2025agentic} into hybrid trajectories. We apply format and \link activation for one epoch each, followed by one additional epoch of self-conditioned training. HybridRL training utilizes 12K deep reasoning examples \citep{yu2025dapo} for math and 1K agentic reasoning and search examples \citep{dong2025agentic} for deep search.
We train \ours with the GRPO objective and set rollout batches of 512 trajectories (\(64\times8\)), followed by one optimization pass over 8 minibatches. 

\textbf{Training Hyperparameters.} Table \ref{tab:hp_sft}, \ref{tab:hp_rl_deepsearch}, and \ref{tab:hp_rl_math} provide detailed parameter setups on HybridSFT and HybridRL. Following \citet{dong2025agentic,yu2025dapo}, we use \ours checkpoints from deep reasoning training for mathematical evaluations and the checkpoint from agentic search training for knowledge-intensive search and agentic reasoning evaluations, as detailed in Appendix~\ref{app:benchmark}.

\begin{table}[!t]
\centering
\caption{Hyperparameter Setups of HybridSFT.}
\label{tab:hp_sft}
\small
\resizebox{0.7\textwidth}{!}{%
\begin{tabular}{ll}
\toprule
\textbf{Hyperparameter} & \textbf{Value} \\
\midrule
Base model & Qwen3-4B/8B/14B-Base \\
Maximum sequence length & 16{,}888 \\
Effective batch size & 16 \\
Backbone learning rate & $7\times10^{-6}$ \\
\link learning rate (Activation / Self-conditioning)
& $2\times10^{-4}$ / $5\times10^{-5}$ \\
Weight decay & 0.0 \\
Warmup ratio & 0.1 \\
Learning rate scheduler & Cosine\\
Latent regression weights ($\lambda_\text{cos}$ / $\lambda_\text{mse}$) & 1.0 / 0.2 \\
Number of training epochs & 3 (1 / 1 / 1)\\
\bottomrule
\end{tabular}
}
\end{table}

\begin{table}[!t]
\centering
\small
\caption{Hyperparameter Setups of HybridRL on Agentic Search.}
\label{tab:hp_rl_deepsearch}
\resizebox{0.7\textwidth}{!}{%
\begin{tabular}{ll}
\toprule
\textbf{Hyperparameter} & \textbf{Value} \\
\midrule
Initialization & 8B/14B HybridSFT checkpoint \\
Unique prompts per rollout batch & 64 \\
Rollouts per prompt (group size) & 8 \\
Minibatches per rollout batch & 8 \\
Optimization epochs per rollout batch & 1 \\
Total rollout batch size & 512 trajectories ($64$ $\times$ $8$) \\
Optimization per rollout batch & 1 pass over 8 minibatches \\
Rollout sampling temperature & 1.0 \\
Rollout sampling top-p & 1.0 \\
Reward $r_\text{correct}$ & token-level F1 $\in[0,1]$\\
Reward $r_\text{format}$ & 0 / -1 \\
Reward $r_\text{tool}$ & 0.1 \\
Latent steps per latent block (min/max) & 16 / 256 \\
Max tool calls per rollout & 10 \\
Weight decay & 0.01 \\
KL penalty coefficient & 0 \\
Learning rate schedule & constant \\
Training steps & 150 \\
Maximum sequence length & 16{,}888 \\
Backbone learning rate & $1\times10^{-6}$ \\
Optimizer & AdamW, $\beta=(0.9,\,0.95)$ \\
Search API Setup (per call) & top-10, 1000 characters each \\
\bottomrule
\end{tabular}
}
\end{table}

\begin{table}[!t]
\centering
\small
\caption{Hyperparameter Setups of HybridRL on Deep Reasoning.}
\label{tab:hp_rl_math}
\resizebox{0.7\textwidth}{!}{%
\begin{tabular}{ll}
\toprule
\textbf{Hyperparameter} & \textbf{Value} \\
\midrule
Initialization & 4B/8B/14B HybridSFT checkpoint \\
Unique prompts per rollout batch & 64 \\
Rollouts per prompt (group size) & 8 \\
Minibatches per rollout batch & 8 \\
Optimization epochs per rollout batch & 1 \\
Total rollout batch size & 512 trajectories ($64 \times 8$) \\
Optimization per rollout batch & 1 pass over 8 minibatches \\
Rollout sampling temperature & 1.0 \\
Rollout sampling top-$p$ & 1.0 \\
Reward $r_{\text{correct}}$ & Math Verify ($0$ / $1$) \\
Reward $r_{\text{format}}$ & $0$ / $-1$ \\
Reward $r_{\text{tool}}$ & $+0.1$ \\
Latent steps per latent span (min/max) & 16 / 256 \\
Max tool calls per rollout & 5 \\
Weight decay & 0.01 \\
KL penalty coefficient & 0 \\
Learning rate schedule & Constant \\
Training steps & 200 \\
Maximum sequence length & 16{,}888 \\
Backbone learning rate & $1\times10^{-6}$ \\
Optimizer & AdamW, $\beta=(0.9,\,0.95)$ \\
\bottomrule
\end{tabular}
}
\end{table}

\textbf{Inference Setups.}
During inference, we set the temperature, top-p, and top-k to 0.6, 0.95, and 20, respectively. The generation budget is adjusted for each task based on its relative difficulty and counts both text tokens and latent steps, with limits of 4,096 for mathematical reasoning and knowledge-intensive search tasks, and 24,000 for challenging agentic reasoning tasks.
For each latent span, we enforce a minimum of 16 and a maximum of 256 steps to ensure sustained and semantically rich latent reasoning.
\ours interacts with external environments primarily through two types of tools: a Python interpreter within sandbox environments and a live search engine (e.g., Tavily). 
We cap tool usage at 10 calls per trajectory and retrieve the top 10 snippets for each search query.
We utilize GPT-5.6-Luna as both the search-page summarizer and LLM judge.
All training and inference are conducted on 32 H100 GPUs. We also provide detailed prompt templates in Appendix \ref{app:prompt_template}.

\textbf{Infrastructure and Configurations.} 
As existing training and inference frameworks such as verl \citep{verl} and vLLM \citep{kwon2023efficient} do not directly support multi-turn hybrid generation, we extend their infrastructure to enable hybrid training and generation in \ours.
A shared generation loop coordinates text mode, latent mode, and tool interaction while maintaining each trajectory's persistent KV cache as continuous working memory across turns.
The agent enters latent mode through the special entry anchor \texttt{<latent>}, feeds \link-generated latent inputs back into the backbone, and returns to text mode when the nearest-token readout matches the exit anchor \texttt{</latent>}, as in \eqref{eq:latent-exit}. Tool observations are tokenized and processed against the existing working memory, appending new KV entries without reconstructing or re-prefilling earlier context.

The engine also supports batched rollouts with per-sequence state tracking, allowing trajectories to occupy different generation states within the same batch. Pending tool calls execute concurrently, with waiting and completed sequences masked from further generation. Per-sequence attention masks and position tracking preserve each trajectory's causal context as generation proceeds. For HybridRL, exact latent-context replay recomputes sampled text-token likelihoods in a single differentiable forward pass while keeping recorded latent inputs unchanged. We check rollout--replay agreement of log-probabilities at unchanged policy parameters. Multi-node data parallelism, sharded optimizer states, mixed precision, gradient checkpointing, and resumable checkpoints support full-parameter backbone optimization. Overall, the same engine supports RL rollouts, training-time validation, and downstream evaluation, ensuring consistent mode transitions and working-memory handling across configurable sampling settings, generation budgets, and tool-call setups.

\section{Prompt Template}
\label{app:prompt_template}

\begin{figure}[H]
    \centering
    \includegraphics[width=0.95\linewidth]{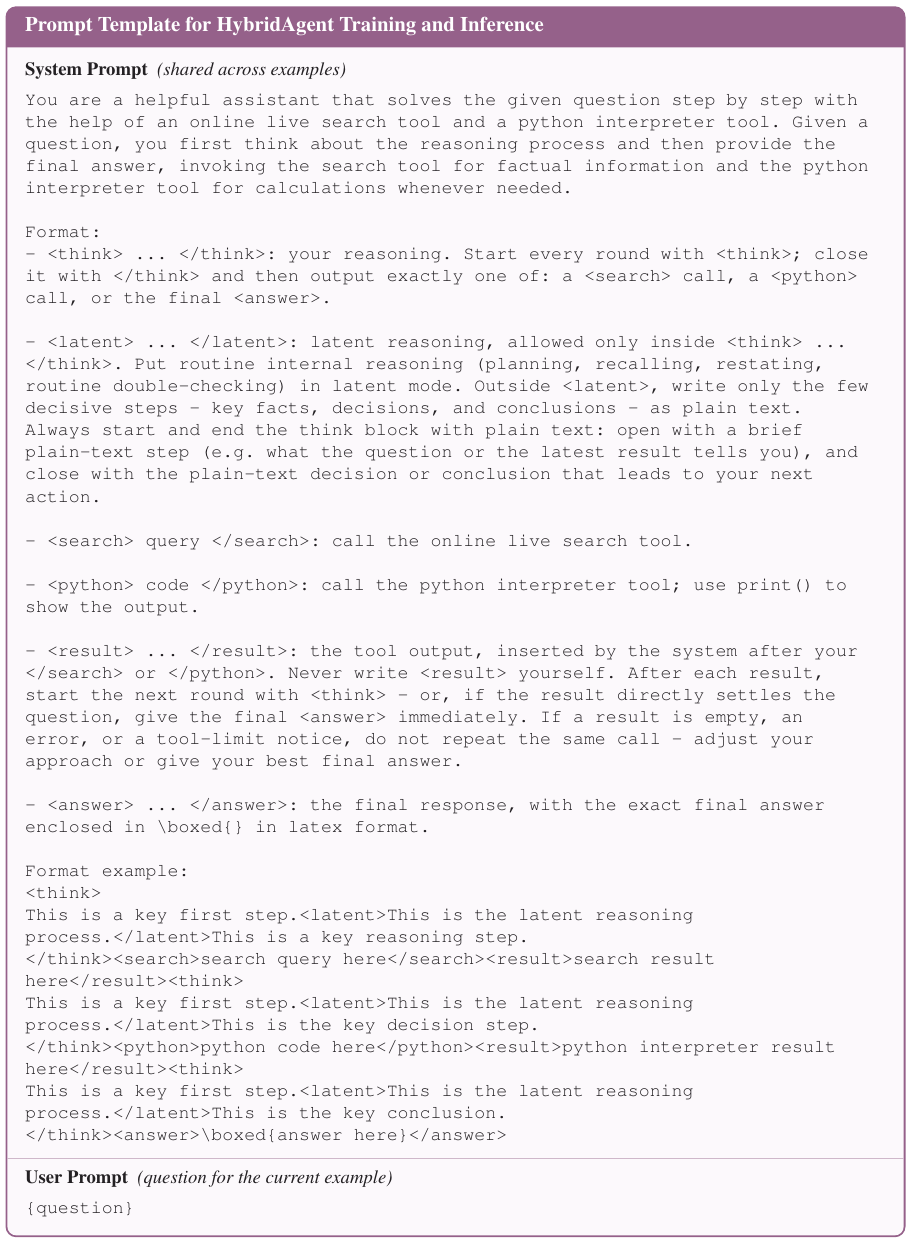}
    \label{fig:template1}
\end{figure}

\begin{figure}[H]
    \centering
    \includegraphics[width=0.95\linewidth]{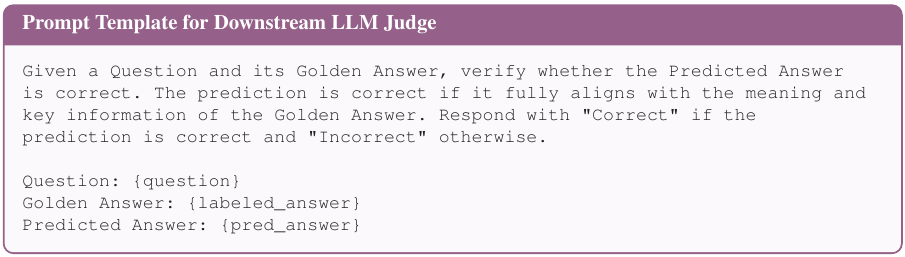}
    \label{fig:template2}
    \vspace{-5pt}
\end{figure}

\begin{figure}[H]
    \centering
    \includegraphics[width=0.95\linewidth]{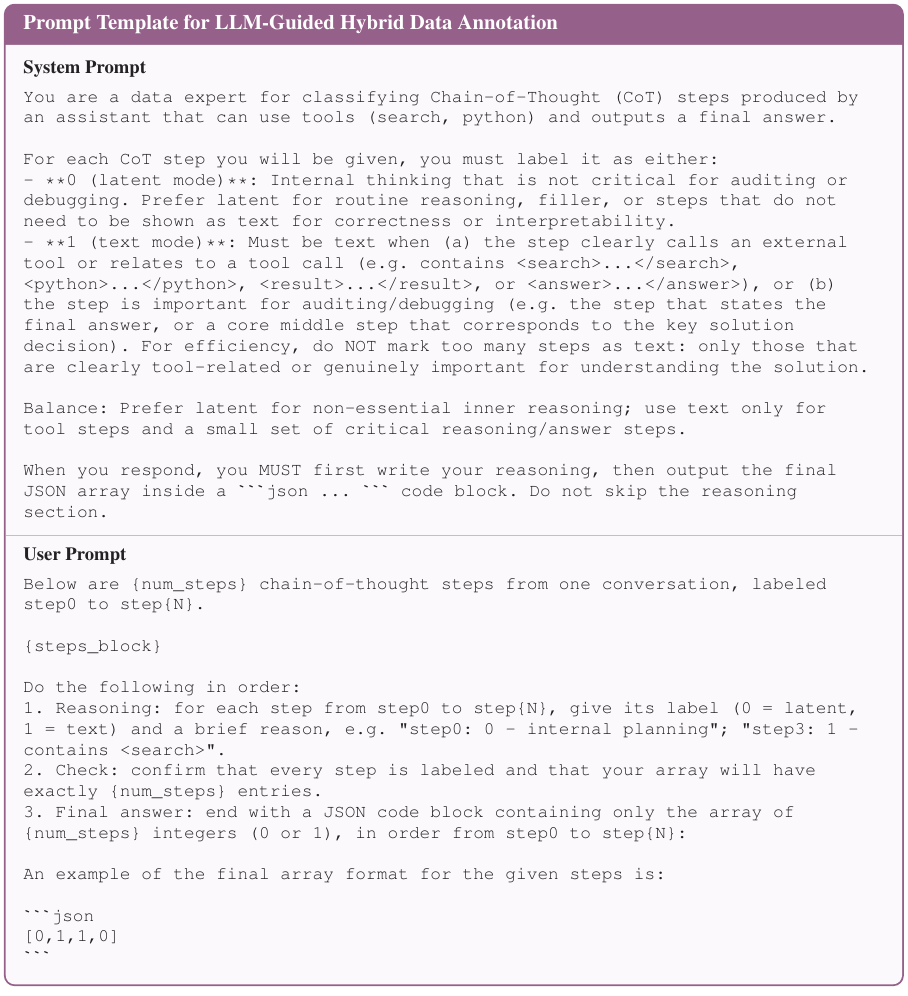}
    \label{fig:template3}
    \vspace{-10pt}
\end{figure}

\section{Case Study of \ours}
\label{app:case_probe}
\vspace{-15pt}

\begin{figure}[H]
    \centering
    \includegraphics[width=0.95\linewidth]{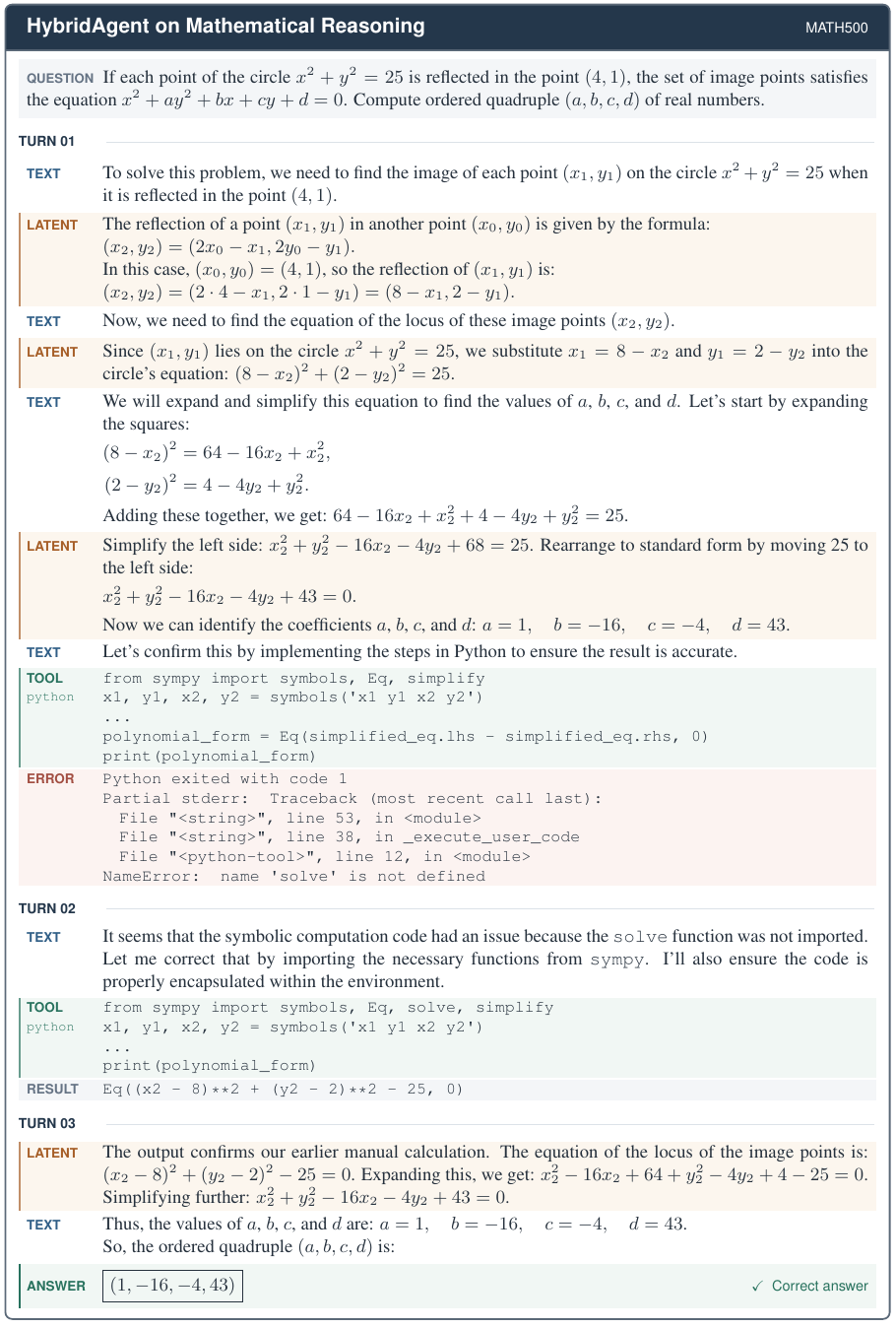}
    \label{fig:case1}
    \vspace{-30pt}
\end{figure}

\begin{figure}[H]
    \centering
    \includegraphics[width=0.95\linewidth]{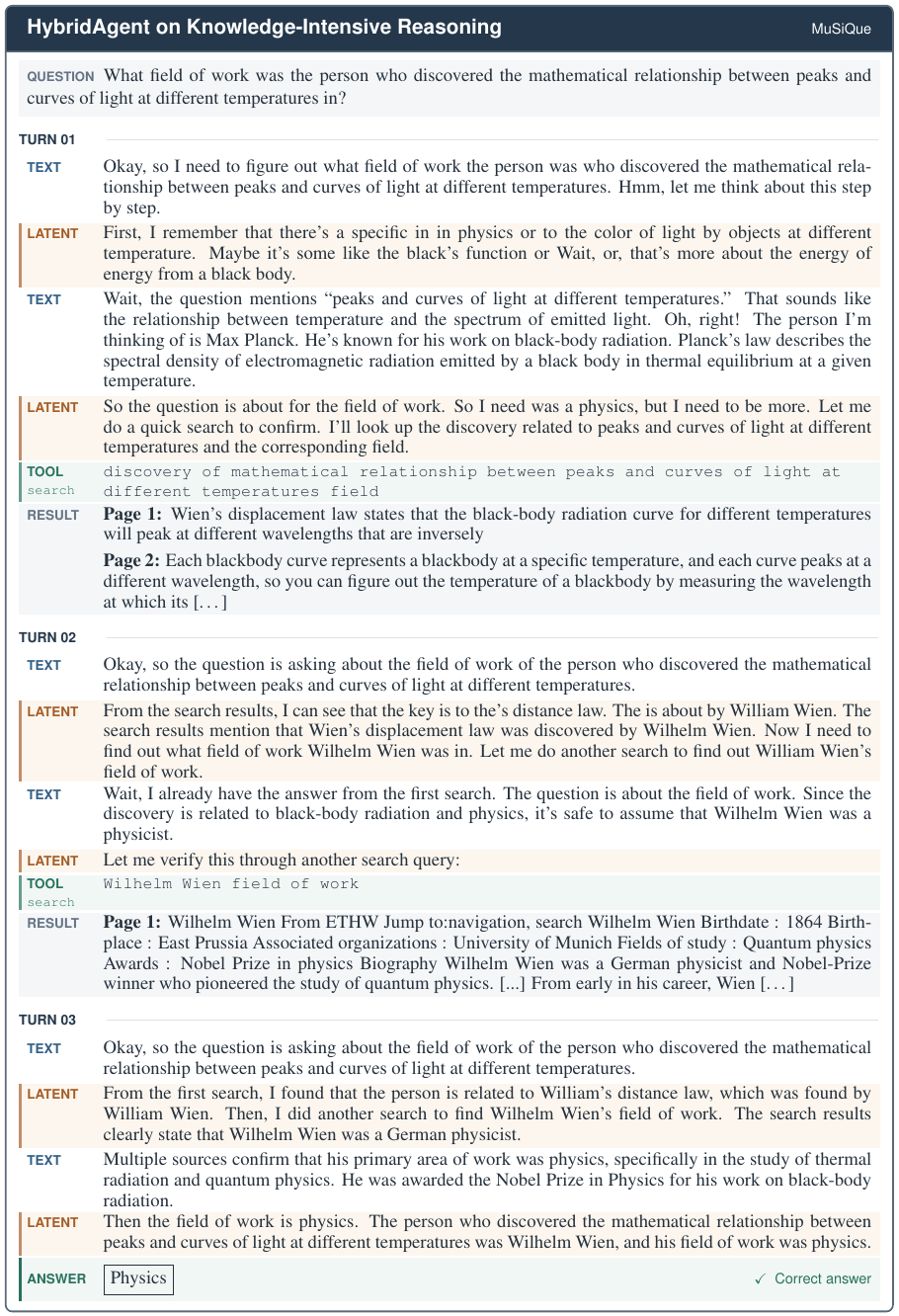}
    \label{fig:case2}
\end{figure}

\begin{figure}[H]
    \centering
    \includegraphics[width=0.95\linewidth]{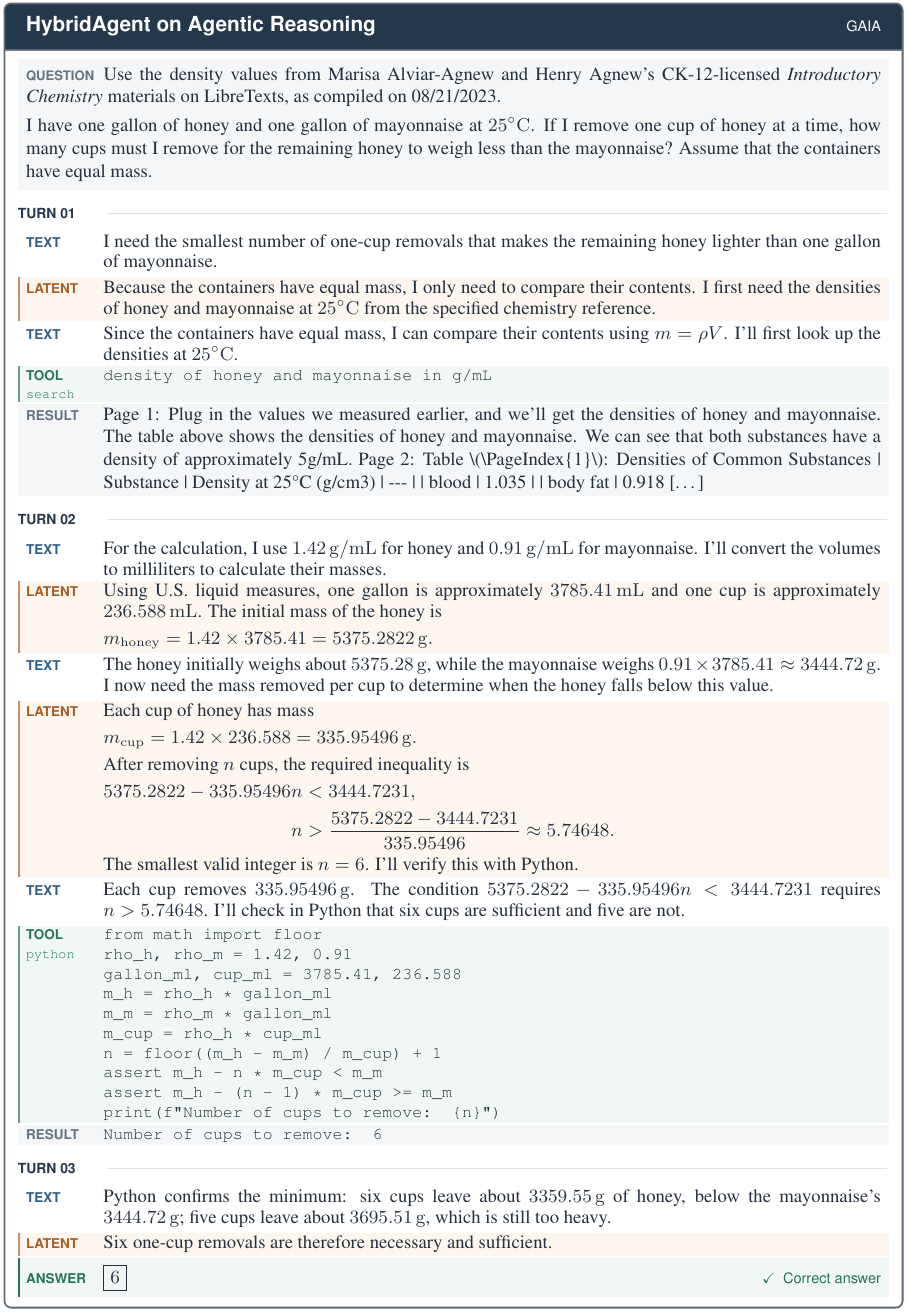}
    \label{fig:case3}
\end{figure}

\end{document}